\newtheorem{remark}{Remark}
\newenvironment{proof-idea}{\proof}{\endproof}
\begin{document}
\fancyhead{}
\title{Towards Plausible Differentially Private ADMM Based Distributed Machine Learning}

\author{Jiahao Ding}
\affiliation{%
 \institution{University of Houston}
}
\email{jding7@uh.edu}

\author{Jingyi Wang}
\affiliation{%
 \institution{San Francisco State University}
}
\email{jingyiwang@sfsu.edu}

\author{Guannan Liang}
\affiliation{%
 \institution{University of Connecticut}
}
\email{guannan.liang@uconn.edu}

\author{Jinbo Bi}
\affiliation{%
 \institution{University of Connecticut}
}
\email{jinbo.bi@uconn.edu}

\author{Miao Pan}
\affiliation{%
 \institution{University of Houston}
}
\email{mpan2@uh.edu}

\begin{abstract}
The Alternating Direction Method of Multipliers (ADMM) and its distributed version have been widely used in machine learning. In the iterations of ADMM, model updates using local private data and model exchanges among agents impose critical privacy concerns. Despite some pioneering works to relieve such concerns, differentially private ADMM still confronts many research challenges. For example, the guarantee of differential privacy (DP) relies on the premise that the optimality of each local problem can be perfectly attained in each ADMM iteration, which may never happen in practice. The model trained by DP ADMM may have low prediction accuracy. In this paper, we address these concerns by proposing a novel (Improved) Plausible differentially Private ADMM algorithm, called PP-ADMM and IPP-ADMM. In PP-ADMM, each agent approximately solves a perturbed optimization problem that is formulated from its local private data in an iteration, and then perturbs the approximate solution with Gaussian noise to provide the DP  guarantee. To further improve the model accuracy and convergence, an improved version IPP-ADMM adopts sparse vector technique (SVT)  to determine if an agent should update its neighbors with the current perturbed solution. The agent calculates the difference of the current solution from that in the last iteration, and if the difference is larger than a threshold, it passes the solution to neighbors; or otherwise  the solution will be discarded. Moreover, we propose to track the total privacy loss under the zero-concentrated DP (zCDP) and provide a generalization performance analysis. Experiments on real-world datasets demonstrate that under the same privacy guarantee, the proposed algorithms are superior to the state of the art in terms of model accuracy and convergence rate.
 
\end{abstract}


\keywords{differential privacy; distributed machine learning; ADMM; decentralized optimization}


\maketitle

\section{Introduction}
Nowadays, the development of machine learning creates many emerging applications that can improve the quality of our life, such as medical diagnosis, autonomous driving, face recognition, etc. With the proliferation of mobile phones and Internet-of-things devices, a vast amount of data have been generated at an ever-increasing rate, which leads to significant computational complexity for data collection and processes via a centralized machine learning approach. Therefore, distributed machine learning plays an increasingly important role in dealing with large scale machine learning tasks. There are many research efforts on distributed training a large scale optimization problem, which mainly consist of two types: (sub)gradient based methods, and Alternating Direction Method of Multipliers (ADMM) based methods. As shown in \cite{chang2014multi}, the convergence of (sub)gradient based methods are usually slow, which is $\mathcal{O}(1/{\sqrt{T}})$ under general convex objectives, while ADMM based algorithms can achieve $\mathcal{O}(1/{T})$ convergence rate, where $T$ is the number of iterations. Therefore, ADMM has been widely used in distributed machine learning \cite{zhang2014asynchronous,mota2013d}, and in this paper, we focus on the distributed ADMM.

In the framework of distributed ADMM, data providers (agents) collaboratively solve a learning problem, which can be decomposed into several subproblems, via an interactive procedure of local computation and message passing. However, the information exchanges during this process raise serious privacy concerns, and the adversary can extract private information from the shared learning models via various inference attacks \cite{shokri2017membership,melis2019exploiting}. To prevent such privacy leakage, differential privacy (DP) \cite{dwork2006calibrating} provides a de facto standard of privacy definition for protecting data privacy, which guarantees that the adversary with arbitrary background knowledge cannot extract any sensitive information about the training dataset. 

Many pioneering works have studied how to effectively integrate ADMM with DP, e.g., \cite{zhang2016dynamic,zhang2018improving,zhang2018recycled,huang2018dp,DingStochastic,ding2019optimal,dingbigdata}, which can be classified into two categories in general. The first type of works is to add a noisy term to perturb the objective function in each ADMM iteration using an objective perturbation approach \cite{chaudhuri2011differentially}. The second type of works is to perturb the updates of original distributed ADMM algorithm via an output perturbation approach \cite{chaudhuri2011differentially}. 
Specifically, as an objective perturbation method, \cite{zhang2016dynamic} proposed to inject noise to the update of the dual variable to provide DP guarantee, while the total privacy loss over the whole iterative process is not quantified. Further, \cite{zhang2018improving,zhang2018recycled} proposed to perturb the penalty parameter of ADMM and re-utilize the previous iteration's results to save the privacy loss. These methods also quantify the total privacy loss over the entire process. Moreover, \cite{huang2018dp} perturbed the augmented Lagrangian with time-varying Gaussian noise and considered a centralized network structure to perform ADMM updates.
As output perturbation methods, \cite{ding2019optimal,dingbigdata} proposed to perturb the primal variables by Gaussian noise with linearly decaying Gaussian noise to preserve DP and maintain the utility.

However, the guarantee of DP in the above works relies on the premise that the optimality of each local problem can be perfectly attained in each iteration during the whole training procedure, which is seldom seen in practice. Further, the trained models from the above works exhibit severe degradation in terms of the convergence performance and model accuracy, compared to their non-private versions.

In this paper, we propose (\textbf{I}mproved) \textbf{P}lausible differentially \textbf{P}rivate \textbf{ADMM} based distributed machine learning algorithm called PP-ADMM and IPP-ADMM, respectively. Instead of requiring each local problem to reach the optimality, PP-ADMM is able to release a noisy approximate solution of the local optimization with Gaussian noise related to the optimization accuracy, while preserving DP. 
  
To further improve the utility, we propose an improved version of PP-ADMM, i.e., IPP-ADMM, by exploiting the sparse vector technique (SVT) to check whether the current approximate solution has enough difference from that of the previous iteration. Moreover, the privacy analysis of our algorithms based on the zero-concentrated DP (zCDP) yields a tight privacy loss bound. We analyze the generalization performance of PP-ADMM. Our salient contributions are summarized as follows.

\begin{itemize}
 
\item To release the shackles of ``exact local optimal results" and make ADMM based distributed machine learning achieve DP and plausible, we propose a novel PP-ADMM method by exploiting the inexact solution of the perturbed local optimization over local agent's private data during each ADMM iteration, while preserving the data privacy.

\item We further propose an improved version of PP-ADMM (IPP-ADMM) by employing SVT to evaluate whether the current approximate solution has a big enough difference from that of the previous iteration. If the difference surpasses a predefined threshold, the approximate solution with Gaussian noise will be shared with neighbors; otherwise, the current approximate solution will be discarded. By this way, the redundant privacy loss accumulation and the transmissions of ``low quality'' can be avoided during the ADMM iterative process.

 \item To best track the privacy loss accumulation, we leverage the serial and parallel composition theorems of the zCDP to theoretically quantify and analyze the overall privacy guarantees of the PP-ADMM and IPP-ADMM algorithms. Moreover, we provide a generalization performance analysis of PP-ADMM by measuring the number of data samples required to achieve a certain criteria.
 
\item Through extensive experiments on the real-world datasets, we show the superior performance of the proposed algorithms over the state-of-the-art differentially private ADMM algorithms.
\end{itemize}

The rest of the paper is organized as follows.
Section~\ref{Problem formulation} formulates the optimization problem, and describes preliminaries of ADMM and differential privacy. Then, the plausible private robust ADMM algorithm and its corresponding privacy and sample complexity analysis are presented in Section~\ref{Private ADMM}. In Section~\ref{ippadmm}, we provide the IPP-ADMM, an improved version of PP-ADMM, and the corresponding privacy analysis. The experimental results on real-world datasets are shown in Section~\ref{experiments}. Finally, we draw conclusive remarks in Section~\ref{conclusion}.

\section{Problem Formulation and Preliminaries}\label{Problem formulation}
In this paper, we consider a connected network contains $N$ agents with node set $\mathscr{N}=\{1,\cdots,N\}$, and each agent $i$ has a dataset $D_i$ with $D_i=\{(x_i^n,y_i^n)\}_{n=1}^{|D_i|}$, where $x_i^n \in \mathcal{X}$ is a feature vector and $y_i^n \in \mathcal{Y}$ is a label. The communication among agents can be represented by an undirected graph $G=\{\mathscr{N},\mathscr{E}\}$, where $\mathscr{E}$ denotes the set of communication links between agents. Note that two agents $i$ and $j$ can communicate with each other only when they are neighbors, i.e., $(i,j)\in \mathscr{E} $.
We also denote the set of neighbors of agent $i$ as $\mathscr{B}_i$. The goal is to cooperatively train a classifier $\theta \in \mathbb{R}^d$ over the union of all local datasets in a decentralized fashion (i.e., no centralized controller) while keeping the privacy for each data sample, which can be formulated as an Empirical Risk Minimization (ERM) problem.
\begin{align}\label{erm}  
    \min_{\theta\in \mathbb{R}^d} \sum_{i=1}^N \frac{1}{|D_i|}\sum_{n = 1}^{|D_i|} \mathscr{L}( y_i^n \theta^T x_i^n) +\hat{\lambda}\mathscr{R}(\theta),
\end{align}
where $\mathscr{L}(\cdot):\mathcal{X} \times \mathcal{Y} \times \mathbb{R}^d \to \mathbb{R} $ stands for a convex loss function with $|\mathscr{L}'(\cdot)|\leq1$ and $0<\mathscr{L}''(\cdot)\leq c_1$, $\mathscr{R}(\theta): \mathbb{R}^d \to \mathbb{R}$ is a differentiable and 1-strongly convex regularizer to prevent overfitting, and $\hat{\lambda}\geq 0$ refers to a regularizer parameter that controls the impact of regularizer. We assume that each feature vector $x_i^n$ is normalized to $\|x_i^n\|_2\leq 1$.
Note that the formulations of classification in machine learning like logistic regression, or support vector machines, can also be fallen into the framework of ERM. In order to solve the ERM problem (\ref{erm}) in a decentralized manner, we adopt the simple but efficient optimization method, ADMM. We then in the following subsection review some preliminaries about ADMM algorithm for solving Problem (\ref{erm}).
\subsection{ADMM}
It is easy to see that the ERM problem (\ref{erm}) can be equivalently reformulated as the following consensus form by introducing $\theta_i$, that is, the local copy of common classifier $\theta$ at agent $i$.
\begin{equation} \label{eq:nc}
\begin{array}{cl}
\min\limits_{\{\theta_i\},\{\rho_{ij}\}} &\sum_{i=1}^N f_i(\theta_i) \\
\mbox{s.t.}\ &\theta_i = \rho_{ij},~ \theta_j = \rho_{ij}, ~i \in \mathscr{N},j\in \mathscr{B}_i, \\
\end{array} 
\end{equation} 
where $\{\rho_{ij} |i \in \mathscr{N},j\in \mathscr{B}_i\}$ is a set of slack variables to enforce all local copies are equal to each other, i.e., $\theta_1=\theta_2 = \cdots,=\theta_N$, and $f_i(\theta_i) = \frac{1}{|D_i|}\sum_{n = 1}^{|D_i|} \mathscr{L}( y_i^n \theta_i^T x_i^n ) + \frac{\hat{\lambda}}{N} \mathscr{R}(\theta_i)$. According to Problem (\ref{eq:nc}), each agent $i$ can minimize local function $f_i(\theta_i)$ over its own private dataset with respect to $\theta_i$, under the consensus constraints. In \cite{zhang2018improving}, ADMM is employed to optimize Problem (\ref{eq:nc}) in a decentralized fashion. By defining a dual variable $\lambda_i$ for agent $i$, and introducing the following notion,
$\mathcal{L}_{non}(\theta_i,D_i) =  f_i(\theta_i) + (2\lambda_i^t )^T\theta_i +  \eta \sum_{j \in \mathscr{B}_i}||\dfrac{1}{2}(\theta_i^{t}+\theta_j^{t})-\theta_i||_2^2$,
ADMM then has the following iterative updates in the $(t+1)$-th iteration:
 
 \begin{align}
    \theta_i^{t+1} = \underset{\theta_i}{\text{argmin}} ~~  \mathcal{L}_{non}(\theta_i,D_i);
    \label{eq:prelimi_10}
\end{align}
\begin{equation}\label{eq:prelimi_11}
\lambda_{i}^{t+1} = \lambda_{i}^{t} +  \dfrac{\eta}{2}\sum_{j \in \mathscr{B}_i}(\theta_i^{t+1}-\theta_j^{t+1}), 
\end{equation}
where $\eta>0$ is a penalty parameter. Note that the reason why the variable $\rho_{ij}$ is not appeared in (\ref{eq:prelimi_10}) and (\ref{eq:prelimi_11}) is that it can be expressed by using the primal variable $\theta_{i}$, as shown in \cite{mateos2010distributed}. In the iteration $t+1$, each agent $i \in \mathscr{N}$ updates its local $\theta_i^{t+1}$ via (\ref{eq:prelimi_10}) by using its previous results $\theta_i^{t}$ and $\lambda_i^{t}$, and the shared local classifiers $\theta_j^{t}$ from its neighbors $j \in \mathscr{B}_i$. Next, agent $i$ broadcasts $\theta_i^{t+1}$ to all its neighboring agents. After obtaining all of its neighboring computation results, each agent updates the dual variable $\lambda_i^{t+1}$ through (\ref{eq:prelimi_11}).

\subsection{Differential Privacy}
For the privacy-preserving data analysis, the standard privacy metric, Differential privacy (DP) \cite{dwork2006calibrating,dwork2014algorithmic}, is proposed to measure the privacy risk of each data sample in the dataset, and has already been adopted in many machine learning domains \cite{ding2020private, ding2020differentially, zhang2019differentially,shi2019deep,wang2018globe}. Basically, under DP framework, privacy protection is guaranteed by limiting the difference of the distribution of the output regardless of the value change of any one sample in the dataset. 
\begin{definition}[$(\epsilon,\delta)$-\bf{DP}~\cite{dwork2006calibrating}]
A randomized mechanism $\mathcal{M}$ satisfies $(\epsilon,\delta)$-DP if for any two neighboring datasets $D$ and $\hat{D}$ differing in at most one single data sample, and for any possible output $o \in Range(\mathcal{M})$, we have ${\rm Pr}[\mathcal{M}(D)= o] \leq e^{\epsilon} {\rm Pr}[\mathcal{M}(\hat{D}) = o] + \delta$.
\end{definition} 
Here $\epsilon,\delta$ are privacy loss parameters that indicate the strength of the privacy protection from the mechanism $\mathcal{M}$. The privacy protection is stronger if they are smaller. The above privacy definition reduces to pure DP when $\delta=0$ and when $\delta>0$, it is referred to as approximate DP. We can achieve pure and approximate DP by utilizing two popular approaches called Laplace and Gaussian Mechanism, both of which share the same form $\mathcal{M}(D) = \mathcal{M}_q (D) + {\rm\bf{u}}$, where $\mathcal{M}_q (D)$ is a query function over dataset $D$, and ${\rm\bf{u}}$ is random noise. We also denote two neighboring datasets $D$ and $\hat{D}$ as $D\sim\hat{D}$, and denote ${\rm Lap}(\lambda)$ as a zero-mean Laplace distribution with scale parameter $\lambda$.

\begin{definition}[\bf{Laplace Mechanism \cite{dwork2006calibrating}}]
Given any function $\mathcal{M}_q: \mathcal{D}\to \mathbb{R}^d$, the Laplace Mechanism is defined as: $\mathcal{M}_{L}(D,q,\epsilon) = \mathcal{M}_q (D) + {\rm\bf{u}}$, where $\rm\bf{u}$ is drawn from a Laplace distribution ${\rm Lap}(\frac{\Delta_1}{\epsilon})$ with scale parameter proportional to the $L_1$-sensitivity of $\mathcal{M}_q$ given as $\Delta_1 =\sup_{D\sim\hat{D}} \|\mathcal{M}_q(D)-\mathcal{M}_q(\hat{D})\|_1$. Laplace Mechanism preservers $\epsilon$-differential privacy.
\end{definition}
    
\begin{definition}[\bf{Gaussian Mechanism \cite{dwork2006calibrating}}]
Given any function $\mathcal{M}_q: \mathcal{D}\to \mathbb{R}^d$, the Gaussian Mechanism is defined as: $\mathcal{M}_{G}(D,q,\epsilon,\delta) = \mathcal{M}_q (D) + {\rm\bf{u}}$, where $\rm\bf{u}$ is drawn from a Gaussian distribution $\mathcal{N}(0,\sigma^2I_d)$ with $\sigma \geq \frac{\sqrt{2 \ln(1.25/\delta)  }\Delta_2}{\epsilon}$, and $\Delta_2$ is the $L_2$-sensitivity of function $\mathcal{M}_q$, i.e., $\Delta_2 = \sup_{D\sim\hat{D}}\|\mathcal{M}_q(D)-\mathcal{M}_q(\hat{D})\|_2$. Gaussian Mechanism provides $(\epsilon,\delta)$-differential privacy.
\end{definition}


Next, we introduce a generalization of DP, which is called the zero-concentrated DP (zCDP) \cite{bun2016concentrated} that uses the R\'enyi divergence between $\mathcal{M}(D)$ and $\mathcal{M}(\hat{D})$, which can achieve a much tighter privacy loss bound under multiple privacy mechanisms composition. 
\begin{definition}[\bf{$\rho$-zCDP \cite{bun2016concentrated}}]
We say that a randomized algorithm $\mathcal{M}$ provides $ \rho$-zCDP, if for all neighboring datasets $D$ and $\hat{D}$ and for all $\tau \in (1,\infty)$,  we have $ D_\tau (\mathcal{M}(D) \|\mathcal{M}(\hat{D})) \leq   \rho \tau,$
where $D_\tau (\mathcal{M}(D) \|\mathcal{M}(\hat{D}))$ is the $\tau$-R\'enyi divergence \footnote{Definition can be found in \cite{bun2016concentrated}} between the distribution $\mathcal{M}(D)$ and the distribution $\mathcal{M}(\hat{D})$. 
\end{definition}
The following lemmas show that the Gaussian mechanism satisfies zCDP, the composition theorem of $ \rho $-zCDP, and the relationship among $ \rho $-zCDP, $\epsilon$-DP, and $(\epsilon,\delta)$-DP.
\begin{lemma}[\cite{bun2016concentrated}]\label{Gaussian}
The Gaussian mechanism with noise $\mathcal{N}(0,\sigma^2)$ satisfies $ \Delta_2^2/ (2\sigma^2)$-zCDP.
\end{lemma}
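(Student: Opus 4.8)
The plan is to unfold the definition of $\rho$-zCDP and directly compute the $\tau$-R\'enyi divergence between the two output distributions of the Gaussian mechanism. Since $\mathcal{M}(D) = \mathcal{M}_q(D) + {\rm\bf{u}}$ with ${\rm\bf{u}} \sim \mathcal{N}(0,\sigma^2 I_d)$, the two outputs are the Gaussians $\mathcal{N}(\mathcal{M}_q(D),\sigma^2 I_d)$ and $\mathcal{N}(\mathcal{M}_q(\hat{D}),\sigma^2 I_d)$, which share the common isotropic covariance $\sigma^2 I_d$ and differ only in their means. Writing $\mu_1 = \mathcal{M}_q(D)$ and $\mu_2 = \mathcal{M}_q(\hat{D})$, the problem thus reduces to evaluating the R\'enyi divergence of two Gaussians with equal covariance.

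First I would reduce to the scalar case. Because the covariance is $\sigma^2 I_d$, the $d$ coordinates of the noise are independent, so the likelihood ratio factorizes across coordinates and the R\'enyi divergence is additive. It therefore suffices to compute $D_\tau(\mathcal{N}(m_1,\sigma^2)\|\mathcal{N}(m_2,\sigma^2))$ for scalars $m_1,m_2$ and then sum the per-coordinate contributions, which assemble into $\|\mu_1-\mu_2\|_2^2$.

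The core computation is the one-dimensional Gaussian integral inside $D_\tau$. I would form the density ratio $P(x)/Q(x)$ for $P=\mathcal{N}(m_1,\sigma^2)$ and $Q=\mathcal{N}(m_2,\sigma^2)$; its logarithm is \emph{linear} in $x$ once the quadratic terms cancel, so the expectation $\mathbb{E}_{x\sim Q}[(P(x)/Q(x))^\tau]$ is just the moment generating function of a linear statistic of a Gaussian. Completing the square produces $\exp\!\big(\tau(\tau-1)(m_1-m_2)^2/(2\sigma^2)\big)$; dividing its logarithm by $\tau-1$ cancels the prefactor and yields the clean closed form $D_\tau = \tau (m_1-m_2)^2/(2\sigma^2)$. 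Summing over coordinates gives
\[
D_\tau(\mathcal{M}(D)\|\mathcal{M}(\hat{D})) = \frac{\tau\,\|\mu_1-\mu_2\|_2^2}{2\sigma^2}.
\]

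The final step is to replace the mean gap by the sensitivity. By the definition of the $L_2$-sensitivity, $\|\mu_1-\mu_2\|_2 = \|\mathcal{M}_q(D)-\mathcal{M}_q(\hat{D})\|_2 \le \Delta_2$ for every neighboring pair $D\sim\hat{D}$, hence $D_\tau(\mathcal{M}(D)\|\mathcal{M}(\hat{D})) \le \tau\Delta_2^2/(2\sigma^2)$ uniformly over $\tau\in(1,\infty)$. Matching this against the bound $\rho\tau$ in the definition of zCDP identifies $\rho=\Delta_2^2/(2\sigma^2)$, which is exactly the claim. I expect the only real obstacle to be the scalar Gaussian integral: one must manage the completion-of-the-square so that the $\tau(\tau-1)$ factor emerges and exactly cancels the awkward $1/(\tau-1)$ prefactor of the R\'enyi divergence. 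Once that algebra is arranged correctly, the coordinate-wise decomposition and the sensitivity bound are routine.
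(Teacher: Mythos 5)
Your proof is correct and is essentially the argument given in the cited source \cite{bun2016concentrated} (the paper itself only cites the lemma rather than reproving it): the exact Rényi divergence $D_\tau(\mathcal{N}(\mu_1,\sigma^2 I_d)\,\|\,\mathcal{N}(\mu_2,\sigma^2 I_d)) = \tau\|\mu_1-\mu_2\|_2^2/(2\sigma^2)$ via the Gaussian moment generating function, followed by the sensitivity bound $\|\mu_1-\mu_2\|_2\le\Delta_2$. The completion-of-the-square and the cancellation of the $1/(\tau-1)$ prefactor work out exactly as you describe, so there is nothing to add.
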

\begin{lemma}[\bf{Serial Composition \cite{bun2016concentrated}}]\label{comp}
If randomized mechanisms $\mathcal{M}_1: \mathcal{D}\to \mathcal{R}_1$ and $ \mathcal{M}_2 : \mathcal{D}\to \mathcal{R}_2$ obey $ \rho_1 $-zCDP and $ \rho_2 $-zCDP, respectively. Then their composition defined as $\mathcal{M}'': D \to \mathcal{R}_1 \times \mathcal{R}_2$ by $ \mathcal{M}''=(\mathcal{M}_1,\mathcal{M}_2)$ obeys $ (\rho_1+\rho_2) $-zCDP.
\end{lemma}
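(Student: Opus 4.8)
The plan is to work directly from the definition of $\rho$-zCDP in terms of the $\tau$-R\'enyi divergence and to exploit the additivity of that divergence over independent product components. First I would fix an arbitrary pair of neighboring datasets $D\sim\hat{D}$ and an arbitrary order $\tau\in(1,\infty)$, and write out the density of the composed output $\mathcal{M}''(D)=(\mathcal{M}_1(D),\mathcal{M}_2(D))$ on the product space $\mathcal{R}_1\times\mathcal{R}_2$. Because the internal randomness of $\mathcal{M}_1$ and $\mathcal{M}_2$ is independent, the joint density at a point $(o_1,o_2)$ factorizes into the product of the two marginal densities, one depending only on $o_1$ and the other only on $o_2$.

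Next I would substitute this product form into the integral $\int \mathcal{M}''(D)(o)^{\tau}\,\mathcal{M}''(\hat{D})(o)^{1-\tau}\,do$ that sits inside the logarithm defining $D_\tau(\mathcal{M}''(D)\|\mathcal{M}''(\hat{D}))$. The double integral over $\mathcal{R}_1\times\mathcal{R}_2$ separates into a product of two single integrals, one over $\mathcal{R}_1$ and one over $\mathcal{R}_2$. Taking logarithms converts this product into a sum, and dividing by $\tau-1$ yields the additivity identity
\begin{equation*}
D_\tau(\mathcal{M}''(D)\|\mathcal{M}''(\hat{D})) = D_\tau(\mathcal{M}_1(D)\|\mathcal{M}_1(\hat{D})) + D_\tau(\mathcal{M}_2(D)\|\mathcal{M}_2(\hat{D})).
\end{equation*}
Finally I would invoke the two hypotheses: each summand on the right is bounded by $\rho_1\tau$ and $\rho_2\tau$ respectively, so their sum is at most $(\rho_1+\rho_2)\tau$. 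Since $D\sim\hat{D}$ and $\tau\in(1,\infty)$ were arbitrary, this is precisely the definition of $(\rho_1+\rho_2)$-zCDP for $\mathcal{M}''$, completing the argument.

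The step I expect to be the real crux is justifying the clean factorization of the joint density. In the non-adaptive reading taken by the statement, where $\mathcal{M}''$ merely pairs two mechanisms with independent randomness, the factorization is immediate and the remainder is bookkeeping. The genuinely delicate case --- and the one actually needed when composing successive ADMM iterations that feed into one another --- is \emph{adaptive} composition, where $\mathcal{M}_2$ additionally receives the output $o_1$ of $\mathcal{M}_1$. There the joint density no longer factorizes outright, and one must instead condition on $o_1$, bound the inner R\'enyi divergence of $\mathcal{M}_2(\cdot,o_1)$ \emph{uniformly} in $o_1$ by $\rho_2\tau$, and then fold that bound back through the outer integral against $\mathcal{M}_1$. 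Ensuring the zCDP guarantee for the second mechanism holds uniformly over every conditioning value is where the argument demands the most care; everything else reduces to the product-measure computation above.
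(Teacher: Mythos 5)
Your argument is correct and is the canonical proof of zCDP composition: additivity of the $\tau$-R\'enyi divergence over product distributions in the non-adaptive case, and conditioning on $o_1$ with a uniform bound on the inner divergence in the adaptive case. The paper itself gives no proof of this lemma --- it is imported verbatim from \cite{bun2016concentrated} --- and your reconstruction matches the proof given there, including the correct observation that the adaptive version (which is what the iterative ADMM analysis actually relies on) is the step requiring the uniform-in-$o_1$ bound rather than outright factorization.
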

\begin{lemma}[\bf{DP to zCDP conversion \cite{bun2016concentrated}}]\label{pure-tozcdp}
If a randomized mechanism $\mathcal{M}$ provides $\epsilon$-DP, then $\mathcal{M}$ is $ \frac{1}{2}\epsilon^2$-zCDP. Moreover, for $\mathcal{M}$ to satisfy $(\epsilon,\delta)$-DP, it suffices to satisfy $\rho$-zCDP with $\rho =\frac{\epsilon^2}{4\ln{(1/\delta)}}$. 
\end{lemma}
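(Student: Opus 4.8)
The statement has two independent implications, and I would prove each by passing to the privacy loss random variable and reading the Rényi divergence as a moment generating function (MGF). Write $P=\mathcal{M}(D)$, $Q=\mathcal{M}(\hat D)$, and let $Z=\log\frac{dP}{dQ}$ be the privacy loss. The plan rests on the identity $D_\tau(P\|Q)=\frac{1}{\tau-1}\log\mathbb{E}_{o\sim P}\left[e^{(\tau-1)Z(o)}\right]$, so that the zCDP condition $D_\tau(P\|Q)\le\rho\tau$ is exactly a cumulant/MGF bound on $Z$, and the two directions of the lemma become, respectively, an MGF \emph{upper bound} and an MGF-to-tail conversion.

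For the first implication ($\epsilon$-DP $\Rightarrow \frac12\epsilon^2$-zCDP), pure $\epsilon$-DP gives $|Z|\le\epsilon$ almost surely, and because neighboring is symmetric this holds in both directions $P\leftrightarrow Q$. Changing measure yields $\mathbb{E}_P[e^{(\tau-1)Z}]=\mathbb{E}_Q[e^{\tau Z}]$, so I would maximize $\mathbb{E}_Q[e^{\tau Z}]$ subject to $|Z|\le\epsilon$ and the normalization $\mathbb{E}_Q[e^{Z}]=1$. Since $e^{\tau z}$ is convex and one maximizes a linear functional of the law of $Z$ under a single linear constraint, the extremal distribution concentrates on the endpoints $\{-\epsilon,+\epsilon\}$; the constraint then fixes the two masses, reducing everything to the explicit two-point quantity $\frac{1}{\tau-1}\log\frac{e^{\epsilon\tau}+e^{\epsilon(1-\tau)}}{1+e^{\epsilon}}$. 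It remains to verify the elementary inequality $\frac{e^{\epsilon\tau}+e^{\epsilon(1-\tau)}}{1+e^{\epsilon}}\le e^{\frac12\epsilon^2\tau(\tau-1)}$ for all $\tau>1$ and $\epsilon\ge 0$, which gives $D_\tau\le\frac12\epsilon^2\tau$. I expect this scalar inequality, and in particular obtaining the sharp constant $\frac12$, to be the main obstacle: a naive Hoeffding bound on $Z$ alone is not tight enough, and one must exploit the two-sided nature of the pure-DP guarantee through the constraint $\mathbb{E}_Q[e^{Z}]=1$.

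For the second implication (zCDP $\Rightarrow (\epsilon,\delta)$-DP), I would turn the MGF bound into a tail bound and then into approximate DP. From $\rho$-zCDP one has $\mathbb{E}_P[e^{(\tau-1)Z}]\le e^{(\tau-1)\tau\rho}$, so a Chernoff/Markov step gives $\Pr_{o\sim P}[Z(o)>\epsilon]\le e^{(\tau-1)\tau\rho-(\tau-1)\epsilon}$. Optimizing the exponent over the order $\tau>1$ (set $s=\tau-1$ and minimize $\rho s^2+(\rho-\epsilon)s$, whose minimizer is $s^\ast=\frac{\epsilon-\rho}{2\rho}$) yields $\Pr_{o\sim P}[Z(o)>\epsilon]\le e^{-(\epsilon-\rho)^2/(4\rho)}$. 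I would then invoke the standard equivalence that a privacy-loss tail bound of the form $\Pr_{o\sim P}[Z(o)>\epsilon]\le\delta$ implies $(\epsilon,\delta)$-DP, so it suffices to require $\frac{(\epsilon-\rho)^2}{4\rho}\ge\ln(1/\delta)$, equivalently $\epsilon\ge\rho+2\sqrt{\rho\ln(1/\delta)}$. Solving the dominant balance $2\sqrt{\rho\ln(1/\delta)}=\epsilon$ gives the stated $\rho=\frac{\epsilon^2}{4\ln(1/\delta)}$, the lower-order additive $\rho$ being dropped in the usual simplification. Here the main obstacle is the tail-to-approximate-DP reduction: one must argue carefully that controlling the \emph{upper} tail of $Z$ under $P$, together with the symmetric statement, delivers the genuinely two-sided $(\epsilon,\delta)$ guarantee of the definition rather than merely a one-directional likelihood-ratio bound.
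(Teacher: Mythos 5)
The paper does not prove this lemma: it is imported verbatim from Bun and Steinke \cite{bun2016concentrated}, so there is no in-paper argument to compare against. Your sketch reproduces the standard proofs from that reference. For the first claim, the reduction of $\mathbb{E}_Q[e^{\tau Z}]$ under $|Z|\le\epsilon$ and $\mathbb{E}_Q[e^{Z}]=1$ to the two-point law on $\{-\epsilon,+\epsilon\}$ is the right move, and you correctly diagnose that plain Hoeffding only yields $D_\tau\le\frac{\tau^2}{\tau-1}\cdot\frac{\epsilon^2}{2}$, which misses the stated bound; the entire content of the claim is then the scalar inequality $\frac{e^{\epsilon\tau}+e^{\epsilon(1-\tau)}}{1+e^{\epsilon}}\le e^{\frac12\epsilon^2\tau(\tau-1)}$, which you assert but do not prove --- that is the one real gap in the first half, though the inequality is true (it is the lemma Bun--Steinke prove in their appendix) and tight to second order in $\epsilon$. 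For the second claim, the Chernoff computation is correct: optimizing $\rho s^2+(\rho-\epsilon)s$ at $s^\ast=\frac{\epsilon-\rho}{2\rho}$ gives $\Pr_P[Z>\epsilon]\le e^{-(\epsilon-\rho)^2/(4\rho)}$, and the tail-to-DP step is the routine decomposition $P(S)\le\int_{S\cap\{Z\le\epsilon\}}e^{Z}\,dQ+\Pr_P[Z>\epsilon]\le e^{\epsilon}Q(S)+\delta$ applied to both orderings of the neighboring pair, so your worry about two-sidedness resolves immediately by symmetry of the neighboring relation.

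One point deserves emphasis because it affects the literal truth of the statement: with $\rho=\frac{\epsilon^2}{4\ln(1/\delta)}$ one gets $2\sqrt{\rho\ln(1/\delta)}=\epsilon$ exactly, so Lemma~\ref{cdptodp} delivers $(\epsilon+\rho,\delta)$-DP rather than $(\epsilon,\delta)$-DP; the exact sufficient value is $\rho=\bigl(\sqrt{\ln(1/\delta)+\epsilon}-\sqrt{\ln(1/\delta)}\bigr)^{2}<\frac{\epsilon^{2}}{4\ln(1/\delta)}$, and the stated formula is the usual simplification valid when $\rho\ll\epsilon$. You flag this explicitly (``the lower-order additive $\rho$ being dropped''), which is the honest reading; a fully rigorous proof of the lemma as written would either adopt the exact $\rho$ or note the approximation, as you do.
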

\begin{lemma}[\bf{zCDP to DP conversion \cite{bun2016concentrated}}]\label{cdptodp}
If a randomized mechanism $\mathcal{M}$ obeys $ \rho $-zCDP, then $\mathcal{M}$ obeys $ (\rho + 2\sqrt{\rho \ln(1/\delta)}, \delta)$-DP for all $0<\delta <1$.
\end{lemma}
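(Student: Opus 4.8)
The plan is to prove this standard zCDP-to-DP conversion by controlling the tail of the \emph{privacy loss random variable} and then invoking a sufficient condition for $(\epsilon,\delta)$-DP. Fix neighboring datasets $D$ and $\hat D$, write $P$ and $Q$ for the output distributions of $\mathcal{M}(D)$ and $\mathcal{M}(\hat D)$, and define the privacy loss $Z=\ln\frac{P(o)}{Q(o)}$ for $o\sim P$. The first key observation is that the R\'enyi divergence is exactly a rescaled log-moment generating function of $Z$: for $\tau>1$ one has the identity
\[
\mathbb{E}_{o\sim P}\bigl[e^{(\tau-1)Z}\bigr]=\mathbb{E}_{o\sim Q}\Bigl[\bigl(P(o)/Q(o)\bigr)^{\tau}\Bigr]=e^{(\tau-1)D_\tau(P\|Q)}.
\]
Combining this with the $\rho$-zCDP hypothesis $D_\tau(P\|Q)\le\rho\tau$ and setting $\lambda=\tau-1>0$ gives the moment bound $\mathbb{E}[e^{\lambda Z}]\le e^{\rho\lambda^2+\rho\lambda}$.

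Next I would apply a Chernoff bound to convert this moment bound into a tail bound on the privacy loss: for any threshold $\epsilon$,
\[
\Pr_{o\sim P}[Z>\epsilon]\le e^{-\lambda\epsilon}\,\mathbb{E}[e^{\lambda Z}]\le e^{\rho\lambda^2+\rho\lambda-\lambda\epsilon}.
\]
The second ingredient is the elementary but crucial lemma that a tail bound on the privacy loss implies DP: if $\Pr_{o\sim P}[Z>\epsilon]\le\delta$, then for every measurable set $S$ one splits $\Pr[P\in S]$ according to whether $Z\le\epsilon$, bounding the high-loss part by $\delta$ and the low-loss part by $e^{\epsilon}\Pr[Q\in S]$, which yields $\Pr[P\in S]\le e^{\epsilon}\Pr[Q\in S]+\delta$. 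Since the zCDP hypothesis holds for all ordered neighboring pairs, swapping the roles of $D$ and $\hat D$ covers both directions, so this establishes $(\epsilon,\delta)$-DP.

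It then remains to optimize over the free parameter. Setting the tail bound equal to $\delta$ and solving for $\epsilon$ gives $\epsilon=\rho+\rho\lambda+\frac{\ln(1/\delta)}{\lambda}$, and minimizing the $\lambda$-dependent part over $\lambda>0$ by AM-GM (the optimum is $\lambda^\star=\sqrt{\ln(1/\delta)/\rho}$, which indeed gives a legitimate $\tau=\lambda^\star+1>1$) yields $\rho\lambda+\frac{\ln(1/\delta)}{\lambda}\ge 2\sqrt{\rho\ln(1/\delta)}$, so substituting back produces exactly $\epsilon=\rho+2\sqrt{\rho\ln(1/\delta)}$. I expect the main obstacle to be the second ingredient rather than the arithmetic: one must justify the ``tail of privacy loss implies DP'' lemma carefully, including measure-theoretic care when $P$ and $Q$ are continuous and the handling of the null set where $Q(o)=0$, and one must verify that committing to the single value $\tau=\lambda^\star+1$ is permitted by the zCDP definition, which quantifies over all $\tau\in(1,\infty)$. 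The concluding AM-GM optimization is then a one-line computation.
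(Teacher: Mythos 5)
The paper does not prove this lemma itself---it is imported verbatim from the cited reference \cite{bun2016concentrated}---so there is no in-paper argument to compare against. Your proposal is correct and is essentially the standard proof from that reference: the identity $\mathbb{E}_{o\sim P}[e^{(\tau-1)Z}]=e^{(\tau-1)D_\tau(P\|Q)}$ turns the zCDP hypothesis into the moment bound $\mathbb{E}[e^{\lambda Z}]\le e^{\rho\lambda^2+\rho\lambda}$, the Chernoff step and the ``tail of the privacy loss implies $(\epsilon,\delta)$-DP'' lemma (with the $Q(o)=0$ set correctly absorbed into the $\delta$ event, and symmetry of the neighboring relation covering both directions) are handled properly, and the AM-GM optimization at $\lambda^\star=\sqrt{\ln(1/\delta)/\rho}$, which is a legitimate choice since $\tau=\lambda^\star+1>1$, yields exactly $\epsilon=\rho+2\sqrt{\rho\ln(1/\delta)}$.
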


\subsection{Sparse Vector Technique}
A powerful approach for achieving DP employs the sparse vector technique (SVT) \cite{dwork2009complexity}, which takes a sequence of queries with bounded sensitivity $\Delta$ against a fixed sensitive dataset and outputs a vector representing whether each answer to the query exceeds a threshold or not. A unique advantage of SVT is that it can output some query answer without paying additional privacy cost. Specifically, as shown in \cite{lyu2017understanding}, SVT has the following four steps. (i), We first compute a noisy threshold $\hat{\gamma}$ by adding a threshold noise ${\rm Lap}{(\frac{\Delta}{\epsilon_1})}$ to the predefined threshold $\gamma$. (ii), We then utilize a noise $v_i\sim {\rm Lap} (\frac{2c\Delta}{ \epsilon_2})$ to perturb each query $q_i$. (iii), We compare each noisy query answer $q_i(D)+\nu_i$ with the noisy threshold $\hat{\gamma}$ and respond whether it is higher ($\top$) or lower ($\perp$) than the threshold. (iv), This procedure continues until the number of $\top$'s in the output meets the predefined bound $c$. According to \cite{lyu2017understanding}, the SVT algorithm satisfies the $\epsilon$-DP with $\epsilon=\epsilon_1+\epsilon_2$. In order to analyze the privacy guarantee of SVT under the zCDP framework, we utilize the conversion result in Lemma \ref{pure-tozcdp}. We can see that SVT satisfies $\frac{1}{2}\epsilon^2$-zCDP.
\section{Plausible Private ADMM}\label{Private ADMM}
In this section, we will present our plausible differentially private (PP-ADMM) by adding Gaussian noise related to the maximum tolerable gradient norm of perturbed objective in each ADMM iteration, which relaxes the requirement of exact optimal solution as shown in \cite{zhang2016dynamic,zhang2018improving,zhang2018recycled}, to provide differential privacy guarantee of each training data sample during the iterative procedure. We also adopt the privacy framework of zCDP to compute much tighter privacy loss estimation of PP-ADMM. In addition, the generalization performance guarantees of PP-ADMM is provided by measuring the number of data samples at each agent to achieve a specific criteria.

Specifically, in each iteration, we perturb the subproblem (\ref{eq:prelimi_10}) with the objective perturbation method the same as used in previous studies \cite{zhang2016dynamic,zhang2018improving,zhang2018recycled}, where a random linear vector $(b_{i1})^{T} \theta_i$ is injected to the objective function, and $b_{i1}$ is a random vector drawn from a zero mean Gaussian distribution $\mathcal{N}(0, {\sigma}_{i1}^2 I_{d})$. Consequently the objective function (\ref{eq:prelimi_10}) used to update the primal variable $\theta_i^{t+1}$ becomes the following modified function: 
 
\begin{align}
\mathcal{L}_{per}(\theta_i,D_i) =  f_i(\theta_i) + (2\lambda_i^t + b_{i1})^T\theta_i +  \eta \sum_{j \in \mathscr{B}_i}||\dfrac{1}{2}(\theta_i^{t}+\theta_j^{t})-\theta_i||_2^2 \label{eq:perturbed_loss}
\end{align}
where $f_i(\theta_i) = \frac{1}{|D_i|}\sum_{n = 1}^{|D_i|} \mathscr{L}( y_i^n \theta_i^T x_i^n ) + \frac{\hat{\lambda}}{N} \mathscr{R}(\theta_i)$. 
In order to ensure DP guarantee, as pointed out in \cite{zhang2016dynamic,zhang2018improving,zhang2018recycled}, 
each agent $i \in \mathscr{N}$ needs to find the optimal solution $\Tilde{\theta}_{i}^{t+1}$ of the perturbed objective function $\mathcal{L}_{per}(\theta_i,D_i)$, i.e., 
\begin{align}
    \Tilde{\theta}_{i}^{t+1} = \underset{\theta_i}{\text{argmin}} ~~\mathcal{L}_{per}(\theta_i,D_i).  \label{perturbedloss}
\end{align}
However, the subproblem (\ref{perturbedloss}) may not be easy to solve and obtain an optimal solution in a finite time. For instance, if we choose logistic regression as loss function, the subproblem (\ref{perturbedloss}) cannot yield an analytical solution due to the complicated form of logistic regression. Especially when the problem dimension or the number of training samples is large, obtaining optimal solution might not be feasible in every iteration.

Thus, we consider obtaining the approximate solution of perturbed objective function $\mathcal{L}_{per}(\theta_i,D_i)$ to provide privacy guarantees when the optimal solution is not attainable. 
  
Specifically, we approximately solve the perturbed problem until the norm of gradient of $\mathcal{L}_{per}$ is within a pre-defined threshold $\beta$. However, due to the limitations of objective perturbation method \cite{chaudhuri2011differentially}, releasing this inexact solution leads to the failure of providing DP guarantee. We thus perturb the approximated solution $\hat{\theta}_i^{t+1}$ with another random noise $b_{i2}$ from Gaussian distribution $\mathcal{N}(0, {\sigma}_{i2}^2 I_{d})$, to "fuzz" the difference between $\hat{\theta}_{i}^{t+1}$ and the optimal solution $\Tilde{\theta}_{i}^{t+1}$. Note that the noise variance ${\sigma}_{i2}^2$ has the parameter $\beta$ about the maximum tolerable gradient norm, which leads to a trade-off between the gradient norm bound and the difficulty of obtaining an approximate solution within the norm bound.

\begin{algorithm}[!htb]
\caption{Plausible Private ADMM}
\label{alg:PADMM}
\algsetup{indent=2em}
\begin{algorithmic}[1]
\STATE \textbf{Input:} datasets $\{D_i\}_{i=1}^N$; initial variables $\theta_i^0 \in \mathbb{R}^d$ and $\lambda_i^0 = 0_{d}$; step size $\eta$; privacy parameters, $\epsilon_{i1}$, $\delta_{i1}$, $\epsilon_{i3}$, $\rho_{i2}$; { Optimizer} $\mathscr{O}(\cdot,\cdot):\mathscr{F} \times {\bm{\beta}} \to \mathbb{R}^d$ ( $\mathscr{F}$ is the class of objectives, and $\bm{\beta}$ is the optimization accuracy, i.e., the gradient norm of objectives); gradient norm threshold $\beta \in {\bm \beta}$.
\STATE Set $\epsilon_{i1}$, $\delta_{i1}$, $\epsilon_{i3}$, $\rho_{i2}>0$ such that $\epsilon_{i1}>\epsilon_{i3}$.
\STATE Set regularizer parameter $\hat{\lambda}\geq\underset{i}\max \frac{2.8N c_1}{(\epsilon_{i1}-\epsilon_{i3})|D_i|}$.
\FOR {$t =0,\dots, T-1$}
\FOR{$i=1,\dots,N$}
\STATE Generate noise $b_{i1}\sim \mathcal{N}(0, {\sigma}_{i1}^2 I_{d})$ with $\sigma_{i1} = {2\sqrt{2\ln{(1.25/\delta_{i1})}}}/(|D_i|\epsilon_{i3})$.
\STATE Construct the perturbed objective function  $\mathcal{L}_{per}(\theta_i,D_i)$ according to (\ref{eq:perturbed_loss}). 
\STATE Compute an approximate solution $\hat{\theta}_i^{t+1}$: $\hat{\theta}_{i}^{t+1} = \mathscr{O}(\mathcal{L}_{per}(\theta_i,D_i),\beta)$.
\STATE Generate noise $b_{i2}\sim \mathcal{N}(0, {\sigma}_{i2}^2 I_{d})$ with ${\sigma}_{i2} = {\beta}/[{\sqrt{2\rho_{i2}}(\frac{\hat{\lambda}}{N}+2\eta|\mathscr{B}_i|)}]$. 
\STATE Perturb $\hat{\theta}_i^{t+1}$: $\theta_i^{t+1} = \hat{\theta}_i^{t+1} +b_{i2}$.
\ENDFOR
\FOR{$i=1,\dots,N$}
\STATE Broadcast $\theta_i^{t+1}$ to all neighbors $j\in \mathscr{B}_i$.
\ENDFOR
\FOR{$i=1,\dots,N$}
\STATE Update local dual variables $\lambda_{i}^{t+1}$ from $\lambda_{i}^{t+1} = \lambda_{i}^{t} +  \dfrac{\eta}{2}\sum\limits_{j \in \mathscr{B}_i}(\theta_i^{t+1}-\theta_j^{t+1})$.
\ENDFOR
\ENDFOR
\end{algorithmic}
\end{algorithm}

The key steps of PP-ADMM algorithm are summarized in Algorithm \ref{alg:PADMM}. The privacy parameters $(\epsilon_{i1},\delta_{i1})$ are used to perturb the objective function while the parameter $\rho_{i2}$ being used to perturb the approximate solution. Moreover, the parameter $\epsilon_{i3}$, a portion of $\epsilon_{i1}$, is used to scale the noise injected to the objective function, and the remaining privacy budget $(\epsilon_{i1}-\epsilon_{i3})$ is allocated to setting the regularizer parameter.
Notice that we also define an {\bf Optimizer} $\mathscr{O}(\cdot,\cdot):\mathscr{F} \times {\bm{\beta}} \to \mathbb{R}^d$, where $\mathscr{F}$ is the class of objectives, and $\bm{\beta}$ is the optimization accuracy, i.e., the gradient norm of objectives. Each agent $i$ then constructs the perturbed function $\mathcal{L}_{per}(\theta_i,D_i)$ with a Gaussian random vector $b_{i1}$ and finds an inexact solution $\hat{\theta}_i^{t+1}$ where the norm of gradient is lower than $\beta$, i.e.,  $\hat{\theta}_{i}^{t+1} = \mathscr{O}(\mathcal{L}_{per}(\theta_i,D_i),\beta)$.
After that each agent $i$ generates a random Gaussian noise $b_{i2}$ and transmits $\theta_i^{t+1} = \hat{\theta}_i +b_{i2} $ to its neighbors $j \in \mathscr{B}_i$. Finally, each agent updates the local dual variables $\lambda_{i}^{t+1}$ via $\lambda_{i}^{t+1} = \lambda_{i}^{t} +  \dfrac{\eta}{2}\sum\limits_{j \in \mathscr{B}_i}(\theta_i^{t+1}-\theta_j^{t+1})$.

\subsection{Privacy Analysis}
Here, we provide the privacy guarantee of PP-ADMM (Algorithm \ref{alg:PADMM}) in the following two theorems. Due to the limited space, we only provide a proof idea of Theorem \ref{PP}, and the detailed proof can be found in Appendix.

\begin{theorem}
The PP-ADMM in Algorithm \ref{alg:PADMM} satisfies $\rho_i$-zCDP for each agent $i$ with $\rho_i = T(\rho_{i1}+\rho_{i2})$, where $\rho_{i1} = \epsilon_{i1}^2/(4\ln{(1/\delta_{i1})})$, and $\rho_{i2}>0$ is the privacy budget for perturbing the approximate solution.
\end{theorem}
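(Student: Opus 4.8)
The plan is to reduce the $T$-iteration statement to a single-iteration guarantee and then invoke the serial composition of zCDP. Because the noise pair $(b_{i1},b_{i2})$ is drawn freshly and independently in every iteration, and iteration $t+1$ touches the private data $D_i$ only through the freshly perturbed objective $\mathcal{L}_{per}(\theta_i,D_i)$ (its dependence on the previous iterates $\theta_i^{t},\lambda_i^{t}$ and on neighbors' messages being mere post-processing of earlier releases), it suffices to show that the one-step map $D_i\mapsto\theta_i^{t+1}$ is $(\rho_{i1}+\rho_{i2})$-zCDP. Repeated use of Lemma~\ref{comp} across the $T$ adaptive steps then gives $\rho_i=T(\rho_{i1}+\rho_{i2})$-zCDP, which is the claim.

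For the per-iteration bound I would isolate the two noise sources. First consider the \emph{idealized} release of the exact minimizer $\tilde{\theta}_i^{t+1}$ of $\mathcal{L}_{per}$. This is exactly the objective-perturbation mechanism of \cite{chaudhuri2011differentially}: the injected linear term $(b_{i1})^{T}\theta_i$ with variance $\sigma_{i1}^2$, combined with the strong convexity supplied by the $1$-strongly convex regularizer $\tfrac{\hat{\lambda}}{N}\mathscr{R}(\theta_i)$, the bounds $|\mathscr{L}'|\le 1$, $\mathscr{L}''\le c_1$ and $\|x_i^n\|_2\le 1$, certifies $(\epsilon_{i1},\delta_{i1})$-DP for $\tilde{\theta}_i^{t+1}$; here $\epsilon_{i3}$ scales the noise and the remaining budget $\epsilon_{i1}-\epsilon_{i3}$ is charged to the regularizer, the condition $\hat{\lambda}\ge\max_i \tfrac{2.8Nc_1}{(\epsilon_{i1}-\epsilon_{i3})|D_i|}$ ensuring the regularization dominates the loss curvature as required by the relaxed objective-perturbation bound. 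Converting via the second clause of Lemma~\ref{pure-tozcdp} yields precisely $\rho_{i1}=\epsilon_{i1}^2/(4\ln(1/\delta_{i1}))$-zCDP.

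The second source is the output noise $b_{i2}$, which repairs the failure of objective perturbation under an inexact solve. Since $\mathcal{L}_{per}$ is $\mu$-strongly convex with $\mu=\tfrac{\hat{\lambda}}{N}+2\eta|\mathscr{B}_i|$ and the optimizer returns $\hat{\theta}_i^{t+1}$ with $\|\nabla\mathcal{L}_{per}(\hat{\theta}_i^{t+1})\|_2\le\beta$, strong convexity gives $\|\hat{\theta}_i^{t+1}-\tilde{\theta}_i^{t+1}\|_2\le\beta/\mu$. Thus the gap that $b_{i2}$ must mask has $L_2$-norm at most $\beta/\mu$, and the choice $\sigma_{i2}=(\beta/\mu)/\sqrt{2\rho_{i2}}$ makes the Gaussian mechanism of Lemma~\ref{Gaussian} contribute exactly $\rho_{i2}$-zCDP. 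I would then combine the two contributions: in distribution, producing $\theta_i^{t+1}=\hat{\theta}_i^{t+1}+b_{i2}$ amounts to the exact-minimizer release (protected at level $\rho_{i1}$) followed by a Gaussian correction for a data-dependent mean shift of norm $\le\beta/\mu$ (protected at level $\rho_{i2}$), and Lemma~\ref{comp} adds these to $\rho_{i1}+\rho_{i2}$.

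The hard part will be making this last combination rigorous. One cannot simply treat $\hat{\theta}_i^{t+1}$ as a bounded-sensitivity query and apply the Gaussian mechanism directly: across neighboring datasets $D_i\sim\hat{D}_i$ the displacement $\hat{\theta}_i^{t+1}(D_i)-\hat{\theta}_i^{t+1}(\hat{D}_i)$ is not controlled by a uniform constant but is governed by the objective-perturbation change of variables on $b_{i1}$, which is noise-dependent; and a naive triangle-inequality split of this displacement through $\tilde{\theta}_i^{t+1}$ loses a factor of two in the sensitivity and hence a factor of four in $\rho_{i2}$. The clean route is a joint analysis of $(b_{i1},b_{i2})$: using the first-order stationarity condition for $\hat{\theta}_i^{t+1}$ to express $b_{i1}$ as a function of $\hat{\theta}_i^{t+1}$, the data $D_i$, and the residual gradient (of norm $\le\beta$), change variables from $b_{i1}$ to $\hat{\theta}_i^{t+1}$ carrying the associated Jacobian, and show that integrating out $b_{i2}$ absorbs precisely the inexactness, so that the R\'enyi divergence of the released output splits additively into the $\rho_{i1}$ term from the data-dependence of the stationarity map and the $\rho_{i2}$ term from the $\beta/\mu$-bounded gap. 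Verifying that this absorption is exact, with no stray factor surviving, is the crux; the remainder is bookkeeping and the two levels of composition via Lemma~\ref{comp}.
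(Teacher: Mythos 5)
Your decomposition is exactly the one the paper uses: per iteration, split the release into the ideal objective-perturbation output $\tilde{\theta}_i^{t+1}$ (giving $\rho_{i1}=\epsilon_{i1}^2/(4\ln(1/\delta_{i1}))$ via Lemma~\ref{pure-tozcdp}) and the Gaussian masking of the gap $\|\hat{\theta}_i^{t+1}-\tilde{\theta}_i^{t+1}\|_2\le\beta/(\hat{\lambda}/N+2\eta|\mathscr{B}_i|)$ guaranteed by strong convexity (giving $\rho_{i2}$ via Lemma~\ref{Gaussian}), then apply Lemma~\ref{comp} within each iteration and across the $T$ iterations. The subtlety you flag in the last step --- that the displacement of $\hat{\theta}_i^{t+1}$ across neighboring datasets is not a uniformly bounded-sensitivity query and a naive pass through $\tilde{\theta}_i^{t+1}$ costs a factor of two in sensitivity --- is a genuine issue, but the paper's in-text argument is only a sketch that asserts the $\rho_{i2}$ bound at the same level of detail and defers the full proof to an appendix, so your proposal matches the paper's reasoning as presented.
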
\label{PP}
\begin{proof-idea}
For achieving $\rho_i$-zCDP for each agent $i$ at $t+1$ iteration in Algorithm \ref{alg:PADMM}, we first divide the output of $t+1$ iteration into two parts. The first part is to obtain the optimal solution $\Tilde{\theta}_{i}^{t+1}$ of the perturbed objective function $\mathcal{L}_{per}(\theta_i,D_i)$, and the second part is to obtain the approximate solution with Gaussian noise ${\theta}_{i}^{t+1}$. We then show obtaining the optimal solution $\Tilde{\theta}_{i}^{t+1}$ provides $\rho_{i1}$-zCDP with $\rho_{i1} =\epsilon_{i1}^2/(4\ln{(1/\delta_{i1})})$ for the first part, and releasing an approximate solution in the second part is $\rho_{i2}$-zCDP. By using the composition of zCDP in Lemma \ref{comp}, we can get releasing the perturbed primal variable $\theta_{i}^{t+1}$ at $t+1$ iteration provides $(\rho_{i1}+\rho_{i2})$-zCDP. Considering $T$ iterations, the total privacy loss for each agent $i$ is bounded by $\rho_i = T(\rho_{i1}+\rho_{i2})$.
\end{proof-idea} 
We then give the following parallel composition theorem of $\rho$-zCDP to provides a together characterization of total privacy loss for distributed algorithms. 
\begin{lemma}[{Parallel Composition} \cite{8835283}]\label{parallel}
Suppose that a mechanism $\mathcal{M}$ consists of a sequence of $k$ adaptive mechanism $\mathcal{M}_1,\cdots,\mathcal{M}_k$ where each $\mathcal{M}_i:\prod_{j=1}^{i-1}\mathcal{R}_j\times\mathcal{D}\to\mathcal{R}_i$ and $\mathcal{M}_i$ satisfies $\rho_i$-zCDP. Let $\mathbb{D}_1,\mathbb{D}_2,\cdots,\mathbb{D}_k$ be the result of a randomized partition of the input domain $\mathbb{D}$. The mechanism $\mathcal{M}(D) = (\mathcal{M}_1(D\cap \mathbb{D}_1),\cdots,\mathbb{M}_{k}(D\cap\mathbb{D}_k))$ satisfies $(\underset{i}{\max}~\rho_i)$-zCDP.
\end{lemma}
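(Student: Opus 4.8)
The plan is to argue directly from the R\'enyi-divergence definition of zCDP, exploiting the fact that on a partitioned dataset a single-record change perturbs only one cell. Fix two neighboring datasets $D \sim \hat{D}$ that differ in exactly one sample. Since $\mathbb{D}_1,\dots,\mathbb{D}_k$ is a partition of the input domain $\mathbb{D}$, that differing sample lies in a unique cell $\mathbb{D}_{i^\ast}$; consequently $D \cap \mathbb{D}_i = \hat{D}\cap\mathbb{D}_i$ for every $i \neq i^\ast$. The intuition I want to make precise is that only the sub-mechanism $\mathcal{M}_{i^\ast}$ ever sees a changed input, so the overall privacy cost should equal that single mechanism's cost $\rho_{i^\ast}\le\max_i\rho_i$, rather than the sum $\sum_i\rho_i$ produced by serial composition (Lemma~\ref{comp}).

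To handle the adaptivity, I would write the joint output laws $P=\mathcal{M}(D)$ and $Q=\mathcal{M}(\hat{D})$ on $\mathcal{R}_1\times\cdots\times\mathcal{R}_k$ and factor them by the chain rule, $P(r)=\prod_{i=1}^{k}P_i(r_i\mid r_{<i})$, where $r=(r_1,\dots,r_k)$, $r_{<i}=(r_1,\dots,r_{i-1})$, and $P_i(\cdot\mid r_{<i})$ is the law of $\mathcal{M}_i(r_{<i},\,D\cap\mathbb{D}_i)$; likewise $Q(r)=\prod_i Q_i(r_i\mid r_{<i})$ with $Q_i$ built from $\hat{D}\cap\mathbb{D}_i$. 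Because $D$ and $\hat{D}$ agree on every cell but $\mathbb{D}_{i^\ast}$, the conditional laws coincide, $P_i(\cdot\mid r_{<i})=Q_i(\cdot\mid r_{<i})$ for all $i\neq i^\ast$ and all prefixes $r_{<i}$, so the likelihood ratio collapses to a single factor, $P(r)/Q(r)=P_{i^\ast}(r_{i^\ast}\mid r_{<i^\ast})/Q_{i^\ast}(r_{i^\ast}\mid r_{<i^\ast})$.

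Next I would substitute this into the moment form of the R\'enyi divergence, $\exp\!\big((\tau-1)D_\tau(P\|Q)\big)=\mathbb{E}_{r\sim Q}\big[(P(r)/Q(r))^\tau\big]$, and evaluate the expectation in stages. Integrating out the coordinates $r_{>i^\ast}$ contributes a factor of one (those conditionals are genuine probability distributions), and conditioning on $r_{<i^\ast}$ leaves the inner expectation $\mathbb{E}_{r_{i^\ast}\sim Q_{i^\ast}(\cdot\mid r_{<i^\ast})}\big[(P_{i^\ast}/Q_{i^\ast})^\tau\big]=\exp\!\big((\tau-1)D_\tau(P_{i^\ast}(\cdot\mid r_{<i^\ast})\,\|\,Q_{i^\ast}(\cdot\mid r_{<i^\ast}))\big)$. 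Using that $\mathcal{M}_{i^\ast}(r_{<i^\ast},\cdot)$ is $\rho_{i^\ast}$-zCDP for each fixed prefix, this inner divergence is at most $\rho_{i^\ast}\tau$, so the whole chain gives $D_\tau(P\|Q)\le\rho_{i^\ast}\tau\le(\max_i\rho_i)\,\tau$ for every $\tau\in(1,\infty)$, which is exactly $(\max_i\rho_i)$-zCDP.

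I expect the adaptivity to be the main obstacle: because each $\mathcal{M}_i$ depends on the earlier outputs, the components cannot be treated as independent, and the argument hinges on the chain-rule factorization together with the assumption that zCDP holds per realization of the prefix $r_{<i}$. A secondary subtlety is the word ``randomized'' in the partition: provided the partition's randomness is independent of the sensitive data, I would condition on the realized partition---for which the disjointness argument above applies verbatim---and then recombine realizations using the quasi-convexity of the R\'enyi divergence, noting that $P$ and $Q$ mix over the \emph{same} partition distribution so no extra cost is incurred.
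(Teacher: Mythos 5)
Your proof is correct. Note that the paper does not prove this lemma at all---it is quoted verbatim from the cited reference---so there is no in-paper argument to compare against; your write-up is a valid self-contained derivation and follows the standard route for parallel composition under zCDP: localize the neighboring change to the single cell $\mathbb{D}_{i^\ast}$, factor the joint law by the chain rule so the likelihood ratio collapses to the $i^\ast$-th conditional, bound the resulting R\'enyi moment by $\exp\big((\tau-1)\rho_{i^\ast}\tau\big)$ using the per-prefix zCDP guarantee, and handle the randomized (data-independent) partition by conditioning and joint quasi-convexity of $D_\tau$. You also correctly identify the two points a careless argument would miss---that adaptivity forces the per-prefix reading of each $\mathcal{M}_i$'s zCDP guarantee, and that $D\cap\mathbb{D}_{i^\ast}$ and $\hat{D}\cap\mathbb{D}_{i^\ast}$ are themselves neighboring, which is what licenses invoking $\rho_{i^\ast}$-zCDP there.
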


Based on Lemma \ref{parallel}, we can directly obtain the total privacy loss of PP-ADMM given as follows.
\begin{theorem}
The PP-ADMM in Algorithm \ref{alg:PADMM} satisfies $\rho$-zCDP with $\rho=\underset{i}{\max}~\rho_i$ and satisfies $(\epsilon,\delta)$-DP with $\epsilon = \rho+2\sqrt{\rho\ln{(1/\delta)}}$.
\end{theorem}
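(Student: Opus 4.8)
The plan is to assemble the statement from two ingredients already in hand: the per-agent guarantee established in the first theorem (the one giving $\rho_i = T(\rho_{i1}+\rho_{i2})$), and the composition machinery for zCDP. First I would, for each agent $i$, bundle the entire sequence of broadcast primal variables $\{\theta_i^{t+1}\}_{t=0}^{T-1}$ produced over the $T$ iterations into a single randomized mechanism $\mathcal{M}_i$ acting on the local dataset $D_i$. By the first theorem this composite mechanism satisfies $\rho_i$-zCDP, so the only remaining work is to combine the $N$ agent mechanisms correctly.

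The crucial structural observation is that the local datasets $\{D_i\}_{i=1}^N$ form a partition of the global dataset $D=\bigcup_i D_i$, since each training sample is held by exactly one agent. Consequently, any two neighboring global datasets $D\sim\hat{D}$ differ in a single record lying in exactly one block $D_{i_0}$, while every other agent's local dataset is identical under $D$ and $\hat{D}$. This is precisely the hypothesis of the parallel composition result (Lemma~\ref{parallel}): the blocks $\{D_i\}$ play the role of the partition $\mathbb{D}_1,\dots,\mathbb{D}_N$, and the $\mathcal{M}_i$ are the adaptive components, with the inter-agent message passing (agent $i$ consuming its neighbors' previous broadcasts $\theta_j^{t}$) being exactly the adaptivity that lemma permits, since those broadcasts are themselves prior outputs of the composed mechanism.

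I would then apply Lemma~\ref{parallel} to the collection $(\mathcal{M}_1,\dots,\mathcal{M}_N)$ to conclude that the full PP-ADMM mechanism satisfies $(\max_i \rho_i)$-zCDP; setting $\rho=\max_i \rho_i$ establishes the first claim. For the $(\epsilon,\delta)$-DP statement I would invoke the zCDP-to-DP conversion of Lemma~\ref{cdptodp}, which turns $\rho$-zCDP into $(\rho+2\sqrt{\rho\ln(1/\delta)},\,\delta)$-DP for every $\delta\in(0,1)$, yielding $\epsilon=\rho+2\sqrt{\rho\ln(1/\delta)}$ as stated.

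The main obstacle I anticipate is justifying the applicability of parallel composition despite the iteration-interleaved message passing. One must verify that packaging each agent's full $T$-iteration trajectory into a single block-mechanism $\mathcal{M}_i$ is consistent with the adaptive ordering required by Lemma~\ref{parallel}: because a single record change is confined to one block $D_{i_0}$, every other agent's computation depends on that record only through already-privatized messages, so conditioning on the transcript localizes the R\'enyi-divergence analysis to that one block and leaves the maximum (rather than the sum) over agents. Making this localization argument precise is where the real content lies; the subsequent conversions are direct applications of the cited lemmas.
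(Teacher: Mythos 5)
Your proposal follows the paper's own route exactly: the per-agent bound $\rho_i$ from the preceding theorem, parallel composition (Lemma~\ref{parallel}) over the partition $\{D_i\}$ to get $\rho=\max_i\rho_i$, and the zCDP-to-DP conversion (Lemma~\ref{cdptodp}) for the $(\epsilon,\delta)$ claim. The paper treats this as an immediate corollary and gives no further detail, so your added discussion of why the interleaved message passing fits the adaptive hypothesis of the parallel composition lemma is, if anything, more careful than the original.
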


\subsection{Sample Complexity Analysis}
We next measure the generalization performance of PP-ADMM by focusing on the ERM problem given in Section \ref{Problem formulation}. We also assume that data samples of each agent $i$ are drawn from a data distribution $\mathscr{P}$. The expected loss of classifier $\theta_i^{t}$ at iteration $t$ is defined as 
\begin{align*}
    \mathbb{L}(\theta_i^{t}) = \mathbb{E}_{(x,y)\sim \mathscr{P}}\left(\mathscr{L}( y (\theta_i^{t})^T x)\right).
\end{align*}
Following the similar analysis in  \cite{chaudhuri2011differentially,zhang2016dynamic}, we first introduce a reference classifier $\theta_{ref}$ with expected loss $\mathbb{L}(\theta_{ref})$, and we then measure the generalization performance using the number of samples $D_i$ required at each agent to achieve $\mathbb{L}(\theta_i^t)\leq \mathbb{L}(\theta_{ref})+ a_{acc}$, where $a_{acc}$ is the generalization error.

\subsubsection{PP-ADMM without Noise}

Here, we consider the learning performance at all iterations rather than only the final output. 
Let the intermediate updated classifier $\hat{\theta}_{i,non}^{t+1}$ at iteration $t+1$ be 
 $\hat{\theta}_{i,non}^{t+1} = \mathscr{O}(\mathcal{L}_{non}(\theta_i,D_i),\beta)$.
Note that $\{\hat{\theta}_{i,non}^{t+1}\}$ is a sequence of non-private classifier without adding perturbations. 
Let $\theta^* = \underset{\theta_i}{\text{argmin}}~~ f_i{(\theta_i,D_i)}$ be the optimal output of PP-ADMM without Noise.
The sequence $\{\hat{\theta}_{i,non}^{t+1}\}$ is bounded and $\hat{\theta}_{i,non}^{t+1}$ converges to $\theta^*$ as $t\to \infty$.
Therefore, there exists a constant $\Delta_{i,non}^{t+1}$ at iteration $t+1$ such that $\mathbb{L}(\hat{\theta}_{i,non}^{t+1})\leq \mathbb{L}(\theta^*)+ \Delta_{i,non}^{t+1}$. We then have the following result, and the detailed proof can be found in supplemental material.

\begin{theorem}\label{NPP-ADMM}
Consider a regularized ERM problem with $\mathscr{R}(\theta) = \frac{1}{2}\|\theta\|_2^2$, and let $\theta_{ref}$ be the reference classifier for all agents and $\{\hat{\theta}_{i,non}^{t+1}\}$ be a sequence of outputs of PP-ADMM without adding noise.
If the number of samples at agent $i$ satisfies,
\begin{align*}
    |D_i| \geq V \max_{t}\{\frac{ \log{(1/\xi)}}{\frac{ (a_{acc}-\Delta_{i,non}^{t+1})^2}{2\|\theta_{ref}\|^2_2}-(1+a)\beta}\}
\end{align*}
for some constant $V$, then $\hat{\theta}_{i,non}^{t+1}$ satisfies 
\begin{align*}
    \Pr\left[\mathbb{L}(\hat{\theta}_{i,non}^{t+1})\leq \mathbb{L}(\theta_{ref})+ a_{acc} \right]\geq 1-\xi
\end{align*}
with $a_{acc}\geq\Delta_{i,non}^{t+1}$.
\end{theorem}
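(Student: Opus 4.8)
The plan is to read this as a generalization bound for a $\frac{\hat\lambda}{N}$-strongly convex regularized ERM, in the style of Chaudhuri--Monteleoni--Sarwate and Sridharan--Shalev-Shwartz--Srebro, and to control the excess expected risk $\mathbb{L}(\hat\theta_{i,non}^{t+1}) - \mathbb{L}(\theta_{ref})$ by splitting it into four pieces: (i) the ADMM convergence gap, already absorbed into $\Delta_{i,non}^{t+1}$ by the standing assumption $\mathbb{L}(\hat\theta_{i,non}^{t+1}) \le \mathbb{L}(\theta^*) + \Delta_{i,non}^{t+1}$; (ii) the optimization inexactness coming from the Optimizer's gradient-norm tolerance $\beta$; (iii) the regularization bias $\frac{\hat\lambda}{2N}\|\theta_{ref}\|_2^2$; and (iv) the statistical estimation error. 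Writing $\epsilon_g := a_{acc} - \Delta_{i,non}^{t+1} \ge 0$ for the residual budget, it suffices to show that (ii)+(iii)+(iv) do not exceed $\epsilon_g$ with probability at least $1-\xi$.

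To produce (iii) and (iv) I would introduce the regularized population and empirical objectives $J(\theta) = \mathbb{L}(\theta) + \frac{\hat\lambda}{N}\mathscr{R}(\theta)$ and $\hat J(\theta) = f_i(\theta, D_i)$. Because $\mathscr{R}$ is $1$-strongly convex, $\hat J$ (and $J$) is $\frac{\hat\lambda}{N}$-strongly convex, and because $|\mathscr{L}'| \le 1$ together with $\|x_i^n\|_2 \le 1$ makes each per-sample loss $\theta \mapsto \mathscr{L}(y_i^n\theta^T x_i^n)$ $1$-Lipschitz --- precisely the hypotheses of a fast-rate uniform-convergence (or algorithmic-stability) bound. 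Such a bound gives, with probability at least $1-\xi$, an excess regularized population risk of order $\frac{\log(1/\xi)}{(\hat\lambda/N)\,|D_i|}$ for the empirical solution, which is term (iv). Term (ii) enters through strong convexity: a gradient norm of at most $\beta$ converts into an empirical-suboptimality (and hence excess-risk) contribution that I would bound by $(1+a)\beta$ for the appropriate constant $a$; this is the step that couples the inexact-solution mechanism of PP-ADMM to the learning-theoretic bound.

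I would then convert back to $\mathbb{L}$ by dropping the nonnegative regularizer on the left-hand side and using $J$'s minimality together with $J(\theta_{ref}) = \mathbb{L}(\theta_{ref}) + \frac{\hat\lambda}{2N}\|\theta_{ref}\|_2^2$ on the right, obtaining a bound of the shape $\mathbb{L}(\theta^*) - \mathbb{L}(\theta_{ref}) \le \frac{\hat\lambda}{2N}\|\theta_{ref}\|_2^2 + (1+a)\beta + \frac{c\log(1/\xi)}{(\hat\lambda/N)\,|D_i|}$ for an absolute constant $c$. The decisive choice is the effective regularization weight: setting $\frac{\hat\lambda}{N}$ proportional to $\epsilon_g/\|\theta_{ref}\|_2^2$ makes the bias term (iii) of order $\epsilon_g$ and substitutes into (iv) so that the product $(\hat\lambda/N)\,|D_i|$ carries exactly the factor $\epsilon_g/\|\theta_{ref}\|_2^2$. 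Imposing that the whole right-hand side stay below $a_{acc}$ then rearranges into a lower bound on $|D_i|$ whose denominator is $\frac{\epsilon_g^2}{2\|\theta_{ref}\|_2^2} - (1+a)\beta$, with the subtraction of $(1+a)\beta$ reflecting the portion of the budget consumed by optimization inexactness and the constant $V$ absorbing the numerical constants. Finally, taking the maximum over $t$ (with a union bound over iterations) makes the guarantee hold simultaneously for every iterate, which is the reason $\max_t$ appears and why $\Delta_{i,non}^{t+1}$ is carried inside it.

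The hard part will be step (iv): obtaining the fast $1/|D_i|$ rate rather than the naive $1/\sqrt{|D_i|}$ rate genuinely requires exploiting strong convexity --- e.g.\ a localized-complexity or stability argument for the $\frac{\hat\lambda}{N}$-strongly convex objective --- and it is delicate precisely because the regularization weight is itself taken to shrink like $\epsilon_g/\|\theta_{ref}\|_2^2$, so the effective strong-convexity constant degrades and the constant $V$ must absorb this interplay. A secondary subtlety is the bookkeeping for $\beta$: one must convert the gradient-norm stopping criterion into a loss gap using both strong convexity and Lipschitzness, track the constant $a$, and make sure the convergence offset $\Delta_{i,non}^{t+1}$ and the optimization term $(1+a)\beta$ are not double-counted when moving between $\hat\theta_{i,non}^{t+1}$ and $\theta^*$.
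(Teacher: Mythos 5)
Your proposal follows the same route the paper takes: its (supplemental) proof is explicitly modeled on the reference-classifier analysis of \cite{chaudhuri2011differentially,zhang2016dynamic}, i.e., the Sridharan--Shalev-Shwartz--Srebro fast-rate bound for the $\frac{\hat{\lambda}}{N}$-strongly convex regularized objective, with the regularization weight tuned to be proportional to $(a_{acc}-\Delta_{i,non}^{t+1})/\|\theta_{ref}\|_2^2$ and the Optimizer's gradient-norm tolerance converted into the $(1+a)\beta$ empirical-suboptimality term. The decomposition into convergence gap, optimization inexactness, regularization bias, and estimation error, the key lemma, and the final rearrangement into the stated lower bound on $|D_i|$ all match, so the proposal is essentially the paper's argument.
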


\begin{remark}

As we can see from Theorem \ref{NPP-ADMM}, the number of data samples $|D_i|$ relies on the $l_2$-norm of reference classifier $\|\theta_{ref}\|_2^2$ and the parameter $\beta$ that bounds the optimization accuracy of the non-private intermediate classifier. The results demonstrate that if $|D_i|$ satisfies 
$ |D_i| \geq V \max_{t}\{\frac{ \log{(1/\xi)}}{\frac{ (a_{acc}-\Delta_{i,non}^{t+1})^2}{2\|\theta_{ref}\|^2_2}-(1+a)\beta}\}$, each agent's non-private intermediate classifier will have an additional error less than $a_{acc}$ compared to any classifier with $\|\theta_{ref}\|_2^2$. Moreover, if $\beta = 0$, the result reduces to $ |D_i| \geq V \max_{t}\{\frac{ 2\|\theta_{ref}\|^2_2\log{(1/\xi)}}{ (a_{acc}-\Delta_{i,non}^{t+1})^2}\}$, the same as given in \cite{zhang2016dynamic}, which shows that the lower optimization accuracy of the non-private intermediate classifier, the more samples required to achieve the same accuracy.
\end{remark}

\subsubsection{PP-ADMM}

We then show the sample complexity of the PP-ADMM algorithm. Similar to the analysis in PP-ADMM without noise, we also consider bounding the generalization error of the intermediate classifier $\theta_i^{t+1}$ of each agent $i$ at all iterations. In order to compare the private classifier $\theta_i^{t+1}$ with a reference classifier $\theta_{ref}$, we follow the same strategy used in \cite{zhang2016dynamic}. We define a new optimization function $f_i^{new}(\theta_i,D_i)=f_i(\theta_i,D_i)+  {b_{i1}}^T\theta_i$ and 
then solving PP-ADMM algorithm is equivalent to solving a new optimization problem, where each agent $i$'s performs local minimization to get $\theta_i^{t+1} = \mathscr{O}(f_i^{new}(\theta_i,D_i),\beta) + b_{i2}$.
The sequence of outputs $\{\theta_i^{t+1}\}$ is bounded and $\theta_i^{t+1}$ converges to a fixed point $\theta^*_{new}$ as $t \to \infty$. Thus, there exists a constant $\Delta_{i,new}^{t+1}$ at $t+1$ iteration, such that $\mathbb{L}({\theta}_{i}^{t+1})\leq \mathbb{L}(\theta^*_{new})+ \Delta_{i,new}^{t+1}$. We then give the following result, and the detailed proof can be found in supplemental material.

\begin{theorem}\label{PPADMM_perfromacne}
Consider a regularized ERM problem with $\mathscr{R}(\theta) = \frac{1}{2}\|\theta\|_2^2$, and let $\theta_{ref}$ be the reference classifier for all agents and $\{{\theta}_{i}^{t+1}\}$ be a sequence of outputs of PP-ADMM.
If the number of samples at agent $i$ satisfies, for some constant $V$,
\begin{align*}
    |D_i| \geq V \max_{t}\{\frac{ \log{(1/\xi)}}{\frac{ (a_{acc}-\Delta_{i,new}^{t+1})^2}{2\|\theta_{ref}\|^2_2}-(1+a)(\beta + \mathscr{H} )}\}
\end{align*}
with $\mathscr{H} = \frac{\sigma_{i2}(a_{acc}-\Delta_{i,new}^{t+1})\sqrt{2d\log\frac{1}{\xi}}}{\|\theta_{ref}\|^2_2} +2\sigma_{i1}^2d\log{\frac{1}{\xi}} $, then $\hat{\theta}_{i,new}^{t+1}$ satisfies 
\begin{align*}
    \Pr\left[\mathbb{L}({\theta}_{i}^{t+1})\leq \mathbb{L}(\theta_{ref})+ a_{acc} \right]\geq 1-3\xi
\end{align*}
with $a_{acc}\geq\Delta_{i,new}^{t+1}$.
\end{theorem}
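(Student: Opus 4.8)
The plan is to reduce the private case to the non-private analysis of Theorem~\ref{NPP-ADMM} applied to the perturbed objective $f_i^{new}$, and then pay for the two injected Gaussian noises with two additional high-probability tail bounds; this is exactly what turns the $1-\xi$ guarantee into $1-3\xi$ and replaces $\beta$ by $\beta+\mathscr{H}$. First I would use the stated equivalence $\theta_i^{t+1}=\mathscr{O}(f_i^{new}(\theta_i,D_i),\beta)+b_{i2}$, writing $\hat{\theta}_{i,new}^{t+1}:=\mathscr{O}(f_i^{new}(\theta_i,D_i),\beta)$ so that $\theta_i^{t+1}=\hat{\theta}_{i,new}^{t+1}+b_{i2}$, and invoke the convergence assumption $\mathbb{L}(\theta_i^{t+1})\le \mathbb{L}(\theta^*_{new})+\Delta_{i,new}^{t+1}$ to reduce the goal to controlling $\mathbb{L}(\theta^*_{new})-\mathbb{L}(\theta_{ref})$.

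Next I would run the same generalization argument as in the non-private theorem, now on $f_i^{new}(\theta_i,D_i)=f_i(\theta_i,D_i)+b_{i1}^T\theta_i$. Because $\mathscr{R}(\theta)=\tfrac{1}{2}\|\theta\|_2^2$, the regularized empirical objective is strongly convex, and optimality of $\theta^*_{new}$ gives $f_i^{new}(\theta^*_{new})\le f_i^{new}(\theta_{ref})$; combined with $1$-Lipschitzness of $\mathbb{L}$ (which follows from $|\mathscr{L}'|\le 1$ and $\|x\|_2\le 1$) this yields, in the style of \cite{chaudhuri2011differentially,zhang2016dynamic}, a bound of the form $\mathbb{L}(\theta^*_{new})-\mathbb{L}(\theta_{ref})\le \sqrt{2\|\theta_{ref}\|_2^2\,\mathcal{E}}$, where the excess regularized-risk term $\mathcal{E}$ collects the optimization error $(1+a)\beta$, the sampling error $V\log(1/\xi)/|D_i|$ (from the regularized-ERM generalization bound, holding with probability $\ge 1-\xi$), and the contribution of the objective perturbation $b_{i1}$.

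Then I would handle the two noise terms separately with Gaussian tail bounds, each failing with probability at most $\xi$. For $b_{i1}\sim\mathcal{N}(0,\sigma_{i1}^2 I_d)$, the linear perturbation $b_{i1}^T\theta$ enters the excess-risk comparison quadratically through strong convexity, and the chi-squared tail bound $\|b_{i1}\|_2^2\le 2\sigma_{i1}^2 d\log(1/\xi)$ produces the second term of $\mathscr{H}$. For the output noise $b_{i2}\sim\mathcal{N}(0,\sigma_{i2}^2 I_d)$, Lipschitzness gives $\mathbb{L}(\theta_i^{t+1})-\mathbb{L}(\hat{\theta}_{i,new}^{t+1})\le\|b_{i2}\|_2$, and the Gaussian-norm tail bound $\|b_{i2}\|_2\le\sigma_{i2}\sqrt{2d\log(1/\xi)}$, carried through the squaring of the accuracy condition $\sqrt{2\|\theta_{ref}\|_2^2\,\mathcal{E}}+\|b_{i2}\|_2\le a_{acc}-\Delta_{i,new}^{t+1}$, produces the first, $(a_{acc}-\Delta_{i,new}^{t+1})$-weighted term of $\mathscr{H}$ via the cross term. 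A union bound over the generalization event and the two noise events gives the overall probability $1-3\xi$.

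Finally I would collect terms: requiring the combined bound to be at most $a_{acc}$ and solving for $|D_i|$ reproduces the non-private denominator $\frac{(a_{acc}-\Delta_{i,new}^{t+1})^2}{2\|\theta_{ref}\|_2^2}-(1+a)\beta$ with $\beta$ replaced by $\beta+\mathscr{H}$, and all problem-dependent constants (such as $c_1$, $\hat{\lambda}$, and network-degree factors) absorbed into $V$. The main obstacle I anticipate is the core step: establishing the strongly-convex excess-risk-to-expected-loss bound $\mathbb{L}(\theta^*_{new})-\mathbb{L}(\theta_{ref})\le\sqrt{2\|\theta_{ref}\|_2^2\,\mathcal{E}}$ and tracking precisely how $b_{i1}$ (quadratically) and $b_{i2}$ (linearly, coupled to the target accuracy) propagate into $\mathscr{H}$ with the correct constants; the remainder is Gaussian tail estimation and bookkeeping.
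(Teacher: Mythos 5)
Your proposal follows essentially the same route as the paper's proof: the paper explicitly reduces to the non-private analysis via the surrogate objective $f_i^{new}(\theta_i,D_i)=f_i(\theta_i,D_i)+b_{i1}^T\theta_i$ (following \cite{chaudhuri2011differentially,zhang2016dynamic}), and your decomposition of $\mathscr{H}$ into the quadratic contribution of $b_{i1}$ via $\|b_{i1}\|_2^2\le 2\sigma_{i1}^2 d\log(1/\xi)$ and the cross-term contribution of $b_{i2}$ via $\|b_{i2}\|_2\le\sigma_{i2}\sqrt{2d\log(1/\xi)}$, together with the union bound over the three failure events yielding $1-3\xi$, matches the intended argument. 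The steps you flag as remaining work (the strongly-convex excess-risk bound and constant bookkeeping) are exactly the routine parts carried over from the non-private case, so the plan is sound.
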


\begin{remark}
Compared to Theorem \ref{NPP-ADMM}, we can see that in Theorem \ref{PPADMM_perfromacne}, the privacy constraints impose an additional term $\mathscr{H}$ with $\mathscr{H}\\= {\sigma_{i2}(a_{acc}-\Delta_{i,new}^{t+1})\sqrt{2d\log\frac{1}{\xi}}}/\|\theta_{ref}\|^2_2 +2\sigma_{i1}^2d\log{\frac{1}{\xi}} $. If both noise variances $\sigma_{i1}$ and $\sigma_{i2} $ are equal to zero, the number of required samples $|D_i|$ will reduce to the same result shown in Theorem \ref{NPP-ADMM}. Moreover, the additional term $\mathscr{H}$ demonstrates that the higher dimension of features, the more added noise to achieve the same accuracy requires more data samples.
\end{remark}

\section{Improved Plausible Private ADMM}\label{ippadmm}
In this section, we present an improved version of PP-ADMM algorithm called Improved Plausible Private ADMM (IPP-ADMM) by leveraging sparse vector technique (SVT) to improve the performance and reduce the communication cost of PP-ADMM. Compared with current differentially private ADMM algorithms \cite{zhang2016dynamic,zhang2018improving,zhang2018recycled}, although the proposed PP-ADMM algorithm can ensure DP guarantee without requiring the optimal solution during each ADMM iteration, the primal variable is updated using the local data in every iteration and frequently broadcasted to neighboring agents, which leads to the privacy loss unavoidably accumulating over the iterations, and compromise the accuracy during the whole training procedure. 

Hence, we adopt SVT that can output some local computational results without paying any privacy budget, to check whether current approximate solution has a big enough difference from that of previous iteration,
 
where the difference is quantified by a quality function, $f_i(\theta_i^t)-f_i(\hat{\theta}_i^{t+1})$, based on the change of the values of local function over the primal variable from previous iteration and current approximate solution. If a sufficient level of difference $\alpha$ is achieved, each agent transmits the current approximate solution with Gaussian noise to its neighbors. Intuitively, if the difference between the current approximate solution $\hat{\theta}_i^{t+1}$ and the previously transmitted $\theta_i^t$ is small, then using either one does not help the convergence of the iterative process, which leads to reducing the communication cost.

However, one difficulty in using SVT is that there is no known priori bound on query (i.e., the quality function) $f_i(\theta_i^t)-f_i(\hat{\theta}_i)$. To bound the sensitivity of $f_i(\theta_i^t)-f_i(\hat{\theta}_i)$, we apply the clipping method to clipping the loss function  $\mathscr{L}(\cdot)$. Given a fixed clipping threshold $C_{loss}$, we compute the value of loss function $\mathscr{L}(\cdot)$ on each local data sample, clip the values at most $C_{loss}$, and compute the value of $f_i(\theta_i^t)-f_i(\hat{\theta}_i)$ based on the clipped values. Note that we denote this loss function clipping procedure as {\texttt{Clip}}. 
\begin{algorithm}[!htb]
\caption{Improved Plausible Private ADMM Run by Agent $i$}
\label{alg:IPPADMM}
\algsetup{indent=2em}
\begin{algorithmic}[1]
\STATE \textbf{Input:} dataset $D_i$; initial variables $\theta_i^0 \in \mathbb{R}^d$ and $\lambda_i^0 = 0_{d}$; threshold, $\alpha$; Maximum number of primal variables that can be broadcasted, $\mathbf{c}$; loss function clipping threshold $C_{loss}$; step size $\eta$; privacy parameters, $\epsilon_{i1}$, $\delta_{i1}$, $\epsilon_{i3}$, $\rho_{i2}$, $\epsilon_1$, $\epsilon_2$; { Optimizer} $\mathscr{O}(\cdot,\cdot):\mathscr{F} \times {\bm{\beta}} \to \mathbb{R}^d$ ( $\mathscr{F}$ is the class of objectives, and $\bm{\beta}$ is the optimization accuracy, i.e., the gradient norm of objectives); gradient norm threshold $\beta \in {\bm \beta}$.
\STATE Set $\epsilon_{i1}$, $\delta_{i1}$, $\epsilon_{i3}$, $\rho_{i2}$, $\epsilon_1$, $\epsilon_2 >0$ such that $\epsilon_{i1}>\epsilon_{i3}$.
\STATE Set regularizer parameter $\hat{\lambda}\geq \underset{i}\max \frac{2.8N c_1}{(\epsilon_{i1}-\epsilon_{i3})|D_i|}$.
\STATE $count_i = 0$.
\FOR {$t =0,\dots, T-1$}
\STATE Generate noise $b_{i1}\sim \mathcal{N}(0, {\sigma}_{i1}^2 I_{d})$ with $\sigma_{i1} = {2\sqrt{2\ln{(1.25/\delta_{i1})}}}/(|D_i|\epsilon_{i1})$.
\STATE Construct the perturbed objective function  $\mathcal{L}_{per}(\theta_i,D_i)$ according to (\ref{eq:perturbed_loss}). 
\STATE Compute an approximate solution $\hat{\theta}_i^{t+1}$: $\hat{\theta}_{i}^{t+1} = \mathscr{O}(\mathcal{L}_{per}(\theta_i,D_i),\beta)$.
\IF {$
    \displaystyle \texttt{Clip}\left[f_i(\theta_i^t)-f_i(\hat{\theta}_i^{t+1})\right] + {\rm Lap}(\frac{4 {\mathbf{c}}{C}_{loss}}{\epsilon_2})\geq \alpha +{\rm Lap}(\frac{2 {\rm \mathbf{c}}{C}_{loss}}{\epsilon_1})$}
\STATE $count_i = count_i +1$, {\bf{Abort}} if $count_i> {\rm \mathbf{c}}$.
\STATE Generate noise $b_{i2}\sim \mathcal{N}(0, {\sigma}_{i2}^2 I_{d})$ with ${\sigma}_{i2} = {\beta}/[{\sqrt{2\rho_{i2}}(\frac{\hat{\lambda}}{N}+2\eta|\mathscr{B}_i|)}]$. 
\STATE Perturb $\hat{\theta}_i^{t+1}$: $\theta_i^{t+1} = \hat{\theta}_i^{t+1} +b_{i2}$.
\STATE Broadcast $\theta_i^{t+1}$ to all neighbors $j\in \mathscr{B}_i$.
\ELSE 
\STATE Let $\theta_i^{t+1} = \theta_i^t$.
\ENDIF
\IF   {$\theta_j^{t+1}$ is not received from neighbor $j\in\mathscr{B}_i$}
\STATE Replace $\theta_j^{t+1} $ with $\theta_j^{t}$.
\ELSE 
\STATE Keep $\theta_j^{t+1}$.
\ENDIF
\STATE Update local dual variables $\lambda_{i}^{t+1}$ from $\lambda_{i}^{t+1} = \lambda_{i}^{t} +  \dfrac{\eta}{2}\sum\limits_{j \in \mathscr{B}_i}(\theta_i^{t+1}-\theta_j^{t+1})$.
\ENDFOR
\end{algorithmic}
\end{algorithm}

The complete procedure of IPP-ADMM algorithm for a single agent is shown in Algorithm \ref{alg:IPPADMM}. The privacy parameters $\epsilon_1$ and $\epsilon_2$ are allocated to perturb the quality function and threshold $\alpha$, respectively. In each iteration, each agent $i$ first constructs the perturbed function $\mathcal{L}_{per}(\theta_i,D_i)$ with a Gaussian random vector $b_{i1}$ and finds an inexact solution $\hat{\theta}_i^{t+1}$, where the norm of gradient is lower than $\beta$, i.e., $\hat{\theta}_{i}^{t+1} = \mathscr{O}(\mathcal{L}_{per}(\theta_i,D_i),\beta)$. Then each agent apply the clipping method \texttt{Clip} to clip the quality function $f_i(\theta_i^t)-f_i(\hat{\theta}_i^{t+1})$ with a clipping threshold $C_{loss}$ to limit the sensitivity of quality function. Further, each agent uses SVT to check whether the difference between the approximate solution $\hat{\theta}_i^{t+1}$ and $\theta_i^t$ is below a noisy threshold $\hat{\alpha}=\alpha +{\rm Lap}(\frac{2 {\rm \mathbf{c}}{C}_{loss}}{\epsilon_1})$ via a noisy quality function, $\texttt{Clip} \left[f_i(\theta_i^t)-f_i(\hat{\theta}_i^{t+1})\right] + {\rm Lap}(\frac{4 {\mathbf{c}}{C}_{loss}}{\epsilon_2})$. If yes, then agent $i$ does not transmit any computational results and let $\theta_i^{t+1} = \theta_i^t$; otherwise, each agent $i$ generates a random noise $b_{i2}\sim \mathcal{N}(0, {\sigma}_{i2}^2 I_{d})$ with ${\sigma}_{i2} = {\beta}/{\sqrt{2\rho_{i2}}(\frac{\hat{\lambda}}{N}+2\eta|\mathscr{B}_i|)}$, and transmits $\theta_i^{t+1} = \hat{\theta}_i^{t+1} +b_{i2} $ to its neighbors. Moreover, each agent maintains a counter ${count}_i$ to bound the total number of broadcasts of primal variables during the whole interactive process. If a predefined transmission number ${\rm \mathbf{c}} ({\rm \mathbf{c}} \leq T)$ is exceeded, agent $i$ stops transmitting anything even when the condition in Line 7 is satisfied. Hence, if agent $i$ does not receive $\theta_j^{t+1}$ from any neighbor $j \in \mathscr{B}_i$, then lets $\theta_j^{t+1} =\theta_j^{t}$. Finally, each agent updates the local dual variables $\lambda_{i}^{t+1}$ via $\lambda_{i}^{t+1} = \lambda_{i}^{t} +  \dfrac{\eta}{2}\sum\limits_{j \in \mathscr{B}_i}(\theta_i^{t+1}-\theta_j^{t+1})$.

\subsection{Privacy Analysis}
We provide the privacy guarantee of IPP-ADMM (Algorithm \ref{alg:IPPADMM}) in following theorem.

\begin{theorem}\label{ipp}
The IPP-ADMM in Algorithm \ref{alg:IPPADMM} satisfies $\rho'_i$-zCDP for each agent $i$ with $\rho'_i = \rho'_1+\mathbf{c}(\rho_{i1}+\rho_{i2})$, where $\rho'_1 = \frac{(\epsilon_1+\epsilon_2)^2}{2}$, $ \rho_{i1}=\epsilon_{i1}^2/(4\ln{(1/\delta_{i1})}) $,  $\rho_{i2}>0$ is the privacy budget for perturbing the approximate solution, and $\mathbf{c}$ $(\mathbf{c}<T)$ is the maximum number of primal variables that can be broadcasted. Moreover, the total privacy guarantee of IPP-ADMM is $\rho'$-zCDP with $\rho' = \underset{i}{\max} ~\rho'_i$.
\end{theorem}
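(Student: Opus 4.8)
The plan is to decompose each agent's entire release into two privacy-consuming components---the sparse-vector-technique (SVT) decisions and the at-most-$\mathbf{c}$ broadcasted primal variables---bound each in the zCDP framework, and then combine them by the composition lemmas. First I would isolate the SVT part of Algorithm~\ref{alg:IPPADMM}: across all $T$ iterations, the comparison in Line~8 perturbs the threshold $\alpha$ with ${\rm Lap}(2\mathbf{c}C_{loss}/\epsilon_1)$ and the clipped quality function $f_i(\theta_i^t)-f_i(\hat{\theta}_i^{t+1})$ with ${\rm Lap}(4\mathbf{c}C_{loss}/\epsilon_2)$, and aborts after $\mathbf{c}$ above-threshold answers. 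Because the \texttt{Clip} operation caps each per-sample loss at $C_{loss}$, the quality-function queries have bounded $L_1$-sensitivity, so this component is exactly an instance of SVT and, by the SVT guarantee recalled earlier, satisfies $(\epsilon_1+\epsilon_2)$-DP; Lemma~\ref{pure-tozcdp} then converts it to $\rho'_1=\frac{1}{2}(\epsilon_1+\epsilon_2)^2$-zCDP. The point I would stress is that this cost is incurred only once for the whole sequence of $T$ decisions and is independent of $T$, since SVT pays nothing for ``below-threshold'' ($\perp$) answers.

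Next I would analyze a single broadcasting iteration, of which there are at most $\mathbf{c}$. Mirroring the proof idea of Theorem~\ref{PP}, I split the released $\theta_i^{t+1}=\hat{\theta}_i^{t+1}+b_{i2}$ into two stages. The objective-perturbed minimization of $\mathcal{L}_{per}(\theta_i,D_i)$ with Gaussian vector $b_{i1}$ of scale $\sigma_{i1}=2\sqrt{2\ln(1.25/\delta_{i1})}/(|D_i|\epsilon_{i1})$ provides $(\epsilon_{i1},\delta_{i1})$-DP, hence $\rho_{i1}=\epsilon_{i1}^2/(4\ln(1/\delta_{i1}))$-zCDP by Lemma~\ref{pure-tozcdp}; the subsequent Gaussian perturbation $b_{i2}\sim\mathcal{N}(0,\sigma_{i2}^2 I_d)$ with $\sigma_{i2}=\beta/[\sqrt{2\rho_{i2}}(\hat{\lambda}/N+2\eta|\mathscr{B}_i|)]$ masks the gap between the approximate and optimal solutions. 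Since the $L_2$-sensitivity of $\hat{\theta}_i^{t+1}$ is at most $\beta/(\hat{\lambda}/N+2\eta|\mathscr{B}_i|)$, Lemma~\ref{Gaussian} gives $\Delta_2^2/(2\sigma_{i2}^2)=\rho_{i2}$-zCDP, and serial composition (Lemma~\ref{comp}) makes one broadcasting iteration $(\rho_{i1}+\rho_{i2})$-zCDP.

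I would then combine the pieces by viewing the run of agent $i$ as the adaptive composition of the single SVT mechanism with the at-most-$\mathbf{c}$ broadcasting releases; the non-broadcasting iterations merely set $\theta_i^{t+1}=\theta_i^t$ and reveal nothing new, so they add no privacy loss beyond the already-counted SVT decision. Lemma~\ref{comp} then yields $\rho'_i=\rho'_1+\mathbf{c}(\rho_{i1}+\rho_{i2})$-zCDP for each agent. Finally, since the agents hold disjoint local datasets $D_1,\dots,D_N$, the parallel composition of Lemma~\ref{parallel} gives the overall $\rho'=\max_i\rho'_i$-zCDP.

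The hard part will be rigorously justifying this clean separation in an adaptive, interleaved execution, where the broadcast decision in each iteration depends on the private data through the quality function and the noisy primal variable is released only conditioned on a $\top$ answer. I would need to argue that this interleaving is still a legitimate adaptive composition, so that the one-time $\rho'_1$ and the per-broadcast $(\rho_{i1}+\rho_{i2})$ add without double-counting, and in particular to verify that the \texttt{Clip} threshold $C_{loss}$ yields exactly the $L_1$-sensitivity implicit in the Laplace scales $2\mathbf{c}C_{loss}/\epsilon_1$ and $4\mathbf{c}C_{loss}/\epsilon_2$ chosen for the two terms appearing in the quality function.
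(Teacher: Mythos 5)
Your proposal is correct and follows essentially the same route as the paper: the same two-part decomposition into the SVT component (bounded via the clipped quality function's sensitivity, the $(\epsilon_1+\epsilon_2)$-DP guarantee of SVT, and the DP-to-zCDP conversion of Lemma~\ref{pure-tozcdp}, which is exactly Lemma~\ref{svt1}) and the at-most-$\mathbf{c}$ broadcast releases (each costing $(\rho_{i1}+\rho_{i2})$-zCDP as in Theorem~\ref{PP}), combined by serial composition (Lemma~\ref{comp}) and then parallel composition across agents (Lemma~\ref{parallel}). The additional details you supply---the $2C_{loss}$ sensitivity bound and the per-broadcast split into objective perturbation plus Gaussian output perturbation---are precisely what the paper delegates to Lemma~\ref{sens_qua} and the proof of Theorem~\ref{PP}.
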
 \vspace{-8.5pt}
\begin{proof}
For achieving $\rho'_i$-zCDP for each agent $i$ in Algorithm \ref{alg:IPPADMM}, we first divide the procedure of the algorithm into two parts. The first part is using SVT to compare the noisy threshold and the perturbed query answer (i.e., the value of quality function) to check the quality of the approximate solution obtained in Step 7 of the Algorithm \ref{alg:PADMM}. The second part is to share the approximate solution with Gaussian noise, whose value is above the threshold. We prove that DP mechanism used in the first part provides $\rho'_1$-zCDP (shown in Lemma \ref{svt1}). Moreover, at each iteration, the privacy budget spending on releasing an approximate solution in the second part is $(\rho_{i1}+\rho_{i2})$-zCDP (shown in Theorem \ref{PP}).
Then, using the composition of zCDP in Lemma \ref{comp}, we obtain the privacy guarantee of IPP-ADMM for each agent $i$ is $\rho_i = \rho_1+\mathbf{c}(\rho_{i1}+\rho_{i2})$ by considering $\mathbf{c}$ times of broadcasting primal variables.
Lastly, we get a total privacy guarantee of IPP-ADMM, i.e., $\rho'$-zCDP with $\rho' = \underset{i}{\max} ~\rho'_i$ by adopting the parallel composition in Lemma \ref{parallel}.
\end{proof} 
Before presenting the privacy guarantee of the first part, i.e., compare the noisy threshold and the perturbed query answer to check the quality of the approximate solution, we first give the sensitivity of the clipped quality function as follows.
\begin{lemma}\label{sens_qua}
Given a clipping threshold $C_{loss}$ of the loss function $\mathscr{L}(\cdot)$, the sensitivity of quality function $f_i(\theta_i^t)-f_i(\hat{\theta}_i^{t+1})$ is at most $2C_{loss}$, where $f_i(\theta_i) = \frac{1}{|D_i|}\sum_{n = 1}^{|D_i|} \mathscr{L}( y_i^n \theta_i^T x_i^n ) + \frac{\hat{\lambda}}{N} \mathscr{R}(\theta_i)$.
\end{lemma}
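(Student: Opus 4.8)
The plan is to bound the $L_1$-sensitivity $\Delta_1 = \sup_{D_i\sim\hat{D}_i}\big|q(D_i)-q(\hat{D}_i)\big|$ of the clipped quality function, where $q(D_i)=\texttt{Clip}\big[f_i(\theta_i^t)-f_i(\hat{\theta}_i^{t+1})\big]$ and, as the SVT analysis requires, the two query points $\theta_i^t$ and $\hat{\theta}_i^{t+1}$ are treated as fixed vectors while the supremum runs over all neighboring datasets $D_i\sim\hat{D}_i$ that differ in a single sample. The first thing I would make explicit is that \texttt{Clip} acts per loss term: following the construction in the text, it replaces each $\mathscr{L}(y_i^n\theta^Tx_i^n)$ by its clipped value in $[0,C_{loss}]$ before these are averaged inside $f_i$. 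This is what converts a possibly unbounded loss into per-sample contributions of bounded magnitude, and it is the crux of the whole estimate.

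Next I would peel off the data-independent part. Because the regularizer contribution $\tfrac{\hat{\lambda}}{N}\mathscr{R}(\cdot)$ does not depend on the samples, it appears identically in $f_i(\cdot\,;D_i)$ and $f_i(\cdot\,;\hat{D}_i)$ at each of the two query points and therefore cancels completely from $q(D_i)-q(\hat{D}_i)$. What remains are the averaged clipped losses. Writing $D_i$ and $\hat{D}_i$ as differing only in the single sample indexed by $n_0$, every summand except the $n_0$-th is common to both datasets, so for a fixed query point $\theta$ the change in $f_i(\theta\,;\cdot)$ equals $\tfrac{1}{|D_i|}$ times the difference of two clipped values, each lying in $[0,C_{loss}]$; hence $\big|f_i(\theta;D_i)-f_i(\theta;\hat{D}_i)\big|\le C_{loss}/|D_i|$. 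Applying this at $\theta=\theta_i^t$ and at $\theta=\hat{\theta}_i^{t+1}$ and combining via the triangle inequality yields $\big|q(D_i)-q(\hat{D}_i)\big|\le 2C_{loss}/|D_i|\le 2C_{loss}$, where the final bound uses $|D_i|\ge 1$. The factor $2$ arises from the two evaluation points of the quality function, each of which shifts when the single differing sample changes.

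The main subtlety to address is that $\theta_i^t$ and $\hat{\theta}_i^{t+1}$ are themselves computed from $D_i$, whereas the SVT privacy guarantee invoked later (Lemma~\ref{svt1}) needs a worst-case sensitivity that is uniform over admissible queries. I would stress that the estimate above holds for \emph{every} fixed pair of query points with no dependence on the data, so taking the supremum over query points—as the adaptive SVT analysis demands—still returns $2C_{loss}$; the residual data-dependence of $\hat{\theta}_i^{t+1}$ is handled separately by the Gaussian perturbation $b_{i2}$ at the moment the solution is released, not by the SVT step. I expect this decoupling, rather than the arithmetic, to be the point requiring the most care, since conflating the two could otherwise inflate the claimed sensitivity.
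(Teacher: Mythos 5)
Your proposal is correct and follows essentially the same route as the paper: fix neighboring datasets differing in one sample, cancel the regularizer and the common loss terms, and bound the two remaining clipped-loss differences (one per evaluation point) by $C_{loss}$ each via the triangle inequality. The only difference is that you retain the $\tfrac{1}{|D_i|}$ normalization, which actually yields the tighter bound $2C_{loss}/|D_i|$ before relaxing to $2C_{loss}$, whereas the paper's displayed computation silently drops that factor; your added remark on the data-dependence of the query points is a reasonable clarification but not a departure in method.
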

 
\begin{proof}
Fix a pair of adjacent datasets $D_i$ and $\hat{D}_i$ and we also assume that only the first data point in $D_i$ and $\hat{D}_i$ are different, i.e., $(x_i^1, y_i^1)$ and $(\hat{x}_i^1,\hat{y}_i^1)$. According to the definition of $L_1$-sensitivity, we have
\begin{align*}
    \Delta_f =~~~& \|f_i(\theta_i^t,D_i)-f_i(\hat{\theta}_i^{t+1},D_i)-f_i(\theta_i^t,\hat{D}_i)+f_i(\hat{\theta}_i^{t+1},\hat{D}_i)\|_1\\
    =~~~&\|\mathscr{L}( y_i^1 (\theta_i^t)^T x_i^1 )-\mathscr{L}( \hat{y}_i^1 (\theta_i^t)^T \hat{x}_i^1 ) \\
   &-(\mathscr{L}( y_i^1 (\theta_i^{t+1})^T x_i^1 )-\mathscr{L}( \hat{y}_i^1 (\theta_i^{t+1})^T \hat{x}_i^1 ) )\|_1\\
    \leq ~~~&\|\mathscr{L}( y_i^1 (\theta_i^t)^T x_i^1 )-\mathscr{L}( \hat{y}_i^1 (\theta_i^t)^T \hat{x}_i^1 )\|_1\\
 &  +\|\mathscr{L}( y_i^1 (\theta_i^{t+1})^T x_i^1 )-\mathscr{L}( \hat{y}_i^1 (\theta_i^{t+1})^T \hat{x}_i^1 ) \|_1\\
    \leq~~~& 2C_{loss}.\tag*{\qedhere}
\end{align*}
\end{proof}
 
Then we show the privacy guarantee of the first part in the following lemma.
\begin{lemma}\label{svt1}
Given the maximum number of primal variables that we can broadcast, $\mathbf{c}$, using SVT to check whether the approximate solution is above the threshold $\alpha$ provides $\rho_1$-zCDP with $\rho_1 = \frac{(\epsilon_1+\epsilon_2)^2}{2}$.
\end{lemma}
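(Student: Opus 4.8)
The plan is to recognize the first part of Algorithm \ref{alg:IPPADMM} as a textbook instance of the sparse vector technique, calibrate its noise to the sensitivity established in Lemma \ref{sens_qua}, and then lift the resulting pure-DP guarantee to zCDP via Lemma \ref{pure-tozcdp}. Concretely, the mechanism compares a noisy threshold $\hat{\alpha} = \alpha + {\rm Lap}(\frac{2\mathbf{c} C_{loss}}{\epsilon_1})$ against the stream of perturbed quality answers $\texttt{Clip}[f_i(\theta_i^t)-f_i(\hat{\theta}_i^{t+1})] + {\rm Lap}(\frac{4\mathbf{c} C_{loss}}{\epsilon_2})$, releasing $\top/\perp$ and halting after at most $\mathbf{c}$ positive responses. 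Since the only data-dependent objects are the clipped queries, the privacy cost is controlled entirely by their sensitivity together with the cap $\mathbf{c}$ on the number of $\top$ outputs.

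First I would fix the sensitivity: by Lemma \ref{sens_qua}, the clipped quality function has $L_1$-sensitivity $\Delta \le 2C_{loss}$. Substituting $\Delta = 2C_{loss}$ and the cap $c=\mathbf{c}$ into the canonical SVT calibration recalled in the preliminaries—threshold noise ${\rm Lap}(\frac{\Delta}{\epsilon_1})$ and per-query noise ${\rm Lap}(\frac{2c\Delta}{\epsilon_2})$—I would check that the noise magnitudes actually used in Algorithm \ref{alg:IPPADMM} are at least as large as what the standard calibration demands: the query noise ${\rm Lap}(\frac{4\mathbf{c} C_{loss}}{\epsilon_2})$ matches exactly, and the threshold noise ${\rm Lap}(\frac{2\mathbf{c} C_{loss}}{\epsilon_1})$ even exceeds the required ${\rm Lap}(\frac{2C_{loss}}{\epsilon_1})$, which can only strengthen privacy. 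Hence the mechanism satisfies the hypotheses of the SVT privacy theorem of \cite{lyu2017understanding}, giving $\epsilon$-DP with $\epsilon = \epsilon_1 + \epsilon_2$. Applying the first part of Lemma \ref{pure-tozcdp}, any $\epsilon$-DP mechanism is $\frac{1}{2}\epsilon^2$-zCDP, so taking $\epsilon = \epsilon_1+\epsilon_2$ yields exactly $\rho_1 = \frac{(\epsilon_1+\epsilon_2)^2}{2}$, as claimed.

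The main obstacle is the structural bookkeeping in the middle step rather than any new divergence computation. I must verify that the clipping operation \texttt{Clip} genuinely enforces the bound of Lemma \ref{sens_qua} on the precise queries posed to the comparison (so the sensitivity feeding SVT is legitimately $2C_{loss}$), and that the counter $count_i$ with its abort rule faithfully realizes the ``at most $\mathbf{c}$ positive answers'' condition that the SVT analysis relies upon; without the abort, the privacy loss would not be bounded by $\epsilon_1+\epsilon_2$. Once these two facts are pinned down, the quantitative bound follows mechanically: the zCDP guarantee is \emph{inherited} from pure DP through Lemma \ref{pure-tozcdp} (indeed the preliminaries already note SVT is $\frac{1}{2}\epsilon^2$-zCDP), so no direct Rényi-divergence estimate for the SVT outputs is required.
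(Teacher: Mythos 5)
Your proposal is correct and follows essentially the same route as the paper's proof: identify the comparison loop as an instance of SVT with sensitivity $2C_{loss}$ (Lemma \ref{sens_qua}), invoke Theorem~1 of \cite{lyu2017understanding} to get $(\epsilon_1+\epsilon_2)$-DP, and convert to $\frac{(\epsilon_1+\epsilon_2)^2}{2}$-zCDP via Lemma \ref{pure-tozcdp}. Your additional observation that the algorithm's threshold noise ${\rm Lap}(\frac{2\mathbf{c}C_{loss}}{\epsilon_1})$ is larger than the canonical calibration ${\rm Lap}(\frac{2C_{loss}}{\epsilon_1})$ and therefore only strengthens the guarantee is a worthwhile detail that the paper's proof glosses over.
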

\begin{proof}
During the whole training process, each agent will receive a stream of queries (i.e., a stream of clipped quality functions $\texttt{Clip} \left[f_i(\theta_i^t)-f_i(\hat{\theta}_i^{t+1})\right]$) with sensitivity $2C_{loss}$ and compare them with a noisy threshold $\alpha +{\rm Lap}(\frac{2 {\rm \mathbf{c}}{C}_{loss}}{\epsilon_1})$. According to Theorem 1 in \cite{lyu2017understanding}, this procedure satisfies $(\epsilon_1+\epsilon_2)$-DP and by Lemma \ref{pure-tozcdp}, it also satisfies $ \frac{(\epsilon_1+\epsilon_2)^2}{2}$-zCDP.
\end{proof}

\section{Numerical Experiments}\label{experiments}

{\bf{Datasets}.} Experiments are performed on three benchmark datasets\footnote{\url{http://archive.ics.uci.edu/ml/datasets/Adult}, \url{http://international.ipums.org}}: Adult, US, and Brazil. Adult has 48,842 data samples and 41 features, and the label is to predict whether an annual income is more than \$50k or not. US has 40,000 records and 58 features, and the label is to predict whether the annual income of an individual is more than \$25k. BR has 38,000 samples and 53 features, and the goal is to predict whether the monthly income of an individual is more than \$300.

{\bf{Data preprocessing}.} We consider the same preprocessing procedure as the method used in \cite{zhang2018improving}. We first normalize each attribute so that the maximum attribute value is 1, and normalize each sample so its $L_2$-norm at most 1. As for the label column, we also map it to $\{-1,1\}$. In each simulation, we randomly sample 35,000 records for training and divide them into $N$ parties, and thus each party includes $35000/N$ data samples (i.e., $|D_i|=35000/N$). We denote the rest of the data records as testing data. 

{\bf{Baselines}.} We compare our proposed algorithms against four baseline algorithms: (i) DVP \cite{zhang2016dynamic}, is a dual variable perturbation method, where the dual variable of each agent at each ADMM iteration is perturbed by Gamma noise. (ii) M-ADMM \cite{zhang2018improving}, is a penalty perturbation approach, where each agent's penalty variable is perturbed by Gamma noise at each ADMM iteration. (iii) R-ADMM \cite{zhang2018recycled}, is based on the penalty approach and the re-utilization of previous iteration's results to save the privacy loss. (iv) Non-private (decentralized ADMM without adding noise). Note that the privacy guarantees of DVP, M-ADMM, and R-ADMM hold only when the optimal solution of the perturbed subproblem is obtained in each iteration. In order to have a fair comparison, we adopt the Newton solver to obtain the optimal solution in each iteration. Notice that we also provide sharpened and tight privacy loss of above private ADMM algorithms under the privacy framework of zCDP.

{\bf{Setup}.} We adopt logistic loss $\mathscr{L}(y_i^n {\theta}_i^T x_i^n) =\log(1+\exp(-y_i^n \theta_i^T x_i^n))$ as loss function, and the derivative $\mathscr{L}'(\cdot)$ is bounded with $|\mathscr{L}'(\cdot)|\leq 1 $ and $c_1$-Lipschitz with $c_1=1/4$. We also let $\mathscr{R}(\theta_i) = \frac{1}{2}\|\theta_i\|_2^2$. We evaluate the accuracy by classification error rate over the testing set, defined as $Error~rate = \frac{Number~of~incorrect~predictions}{Total~number~of~predictions~made}$
and the convergence of algorithms by the average loss over the training samples, given by ~$\mathscr{L}_t  = \frac{1}{N}\sum_{i=1}^N\frac{1}{|D_i|}\sum_{n = 1}^{|D_i|}\mathscr{L}(y_i^n(\theta_i^t)^T x_i^n)$. We also report the mean and standard deviation of the average loss. The smaller the average loss, the higher accuracy.


{\bf{Parameter settings}.} We consider a randomly generated undirectedly network with $N=5$ agents and we fix the step size $\eta=0.5$ and the total iteration number $T=30$. We also consider the maximum number of primal variables that can be shared, $\mathbf{c} = 15$. Moreover, to maximize the utility of SVT, we follow the ratio between $\epsilon_1$ and $\epsilon_2$ in \cite{lyu2017understanding}, i.e., $\epsilon_1:\epsilon_2=1:(2\mathbf{c})^{\frac{2}{3}}$. In all experiments, we set $\delta=10^{-4}$, and $\epsilon = \{0.5,1,1.5,2,10\}$.

 \begin{figure} [!t]
\centering
$\begin{array}{c@{\hspace{0.0in}}c@{\hspace{0.0in}}c@{\hspace{0.0in}}c}
\includegraphics[trim={0.15cm 0 1.3cm 0.8cm},clip=true, width=1.64in]{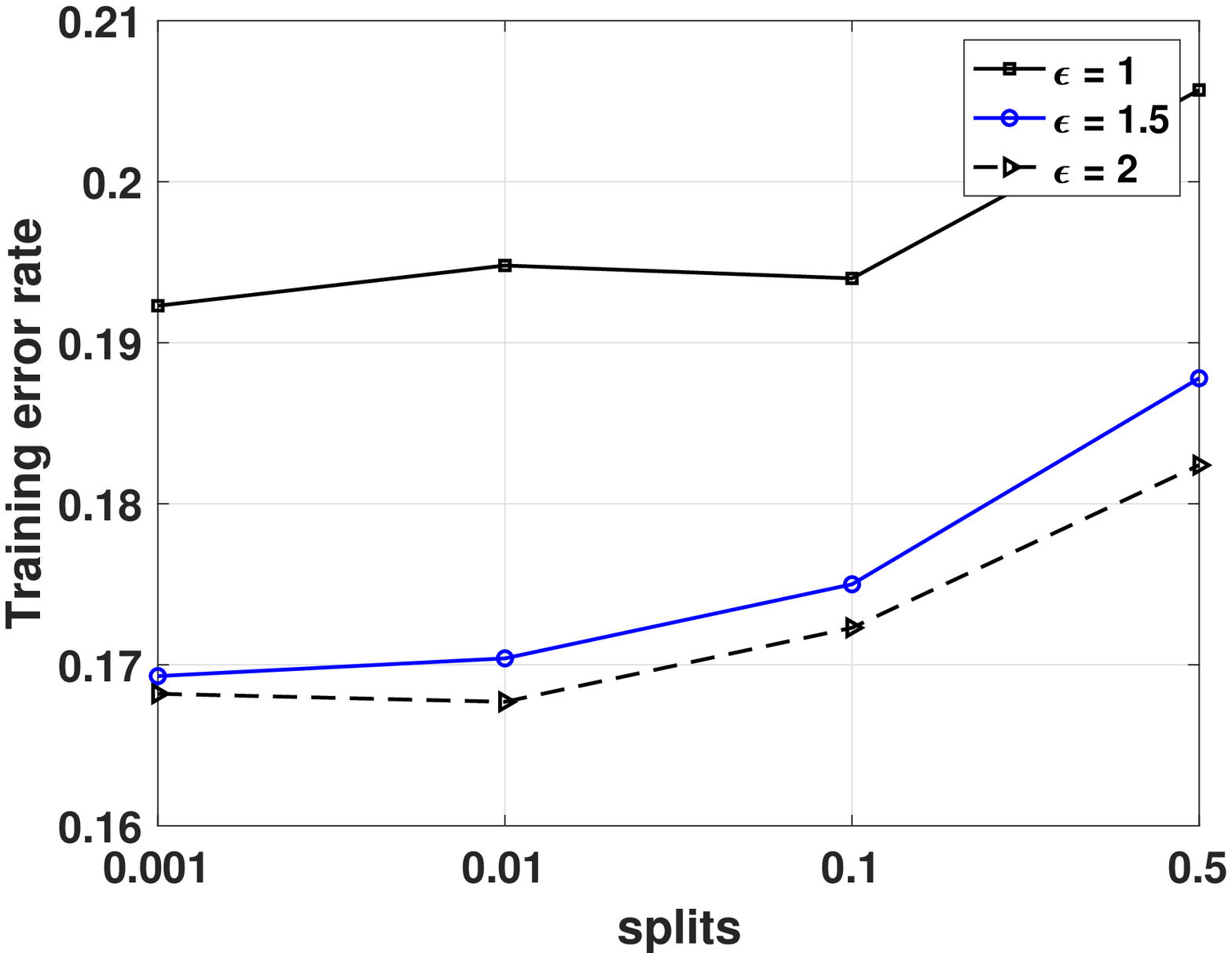} & \includegraphics[trim={0.15cm 0 1.3cm 0.8cm},clip=true, width=1.64in]{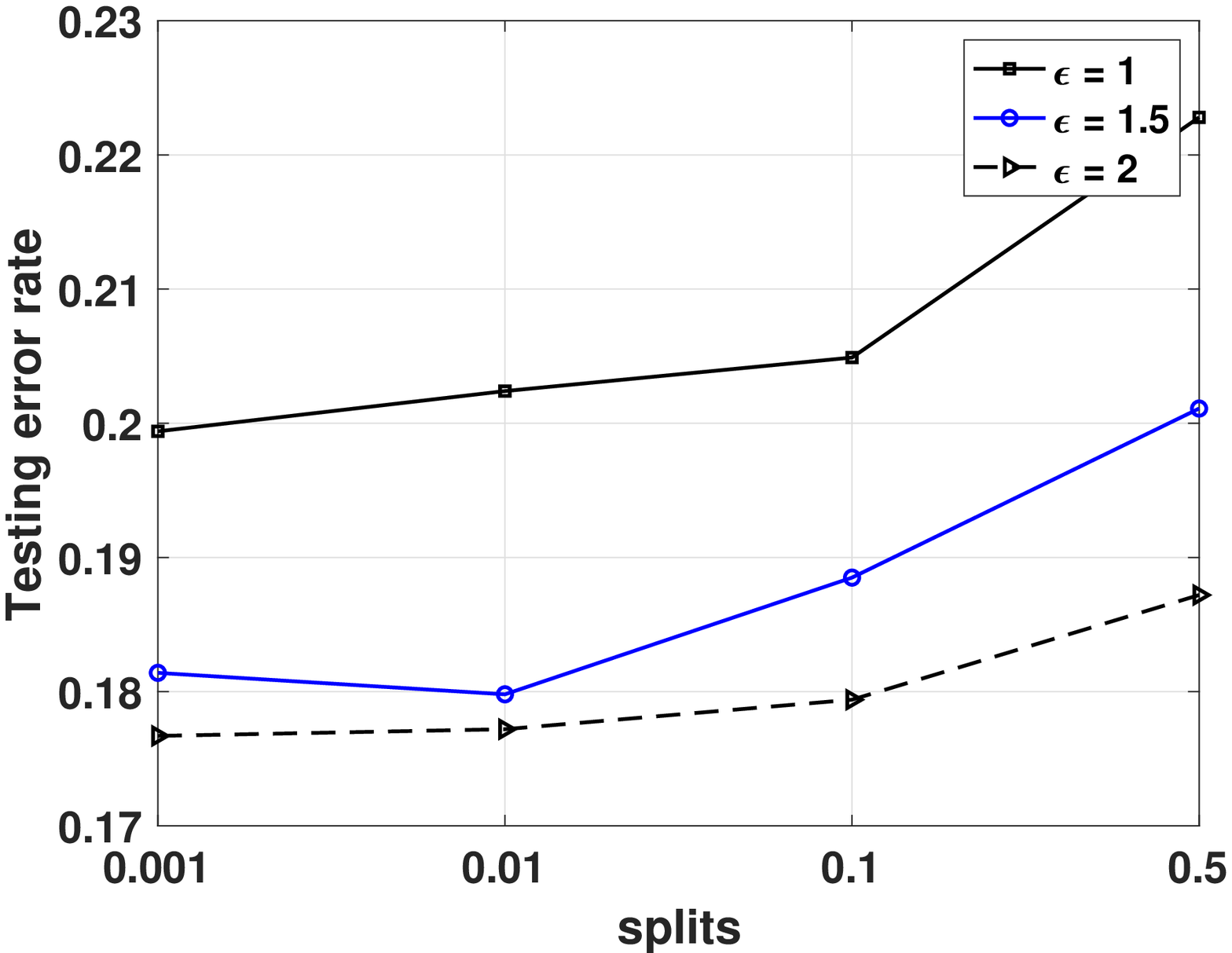} & 
\end{array}$ 
\caption{Effects of privacy budget splitting}
\label{splits}
\end{figure}

\begin{figure} [!t]
\centering
$\begin{array}{c@{\hspace{0.0in}}c@{\hspace{0.0in}}c@{\hspace{0.0in}}c}
\includegraphics[trim={0.15cm 0 1.3cm 0.8cm},clip=true, width=1.64in]{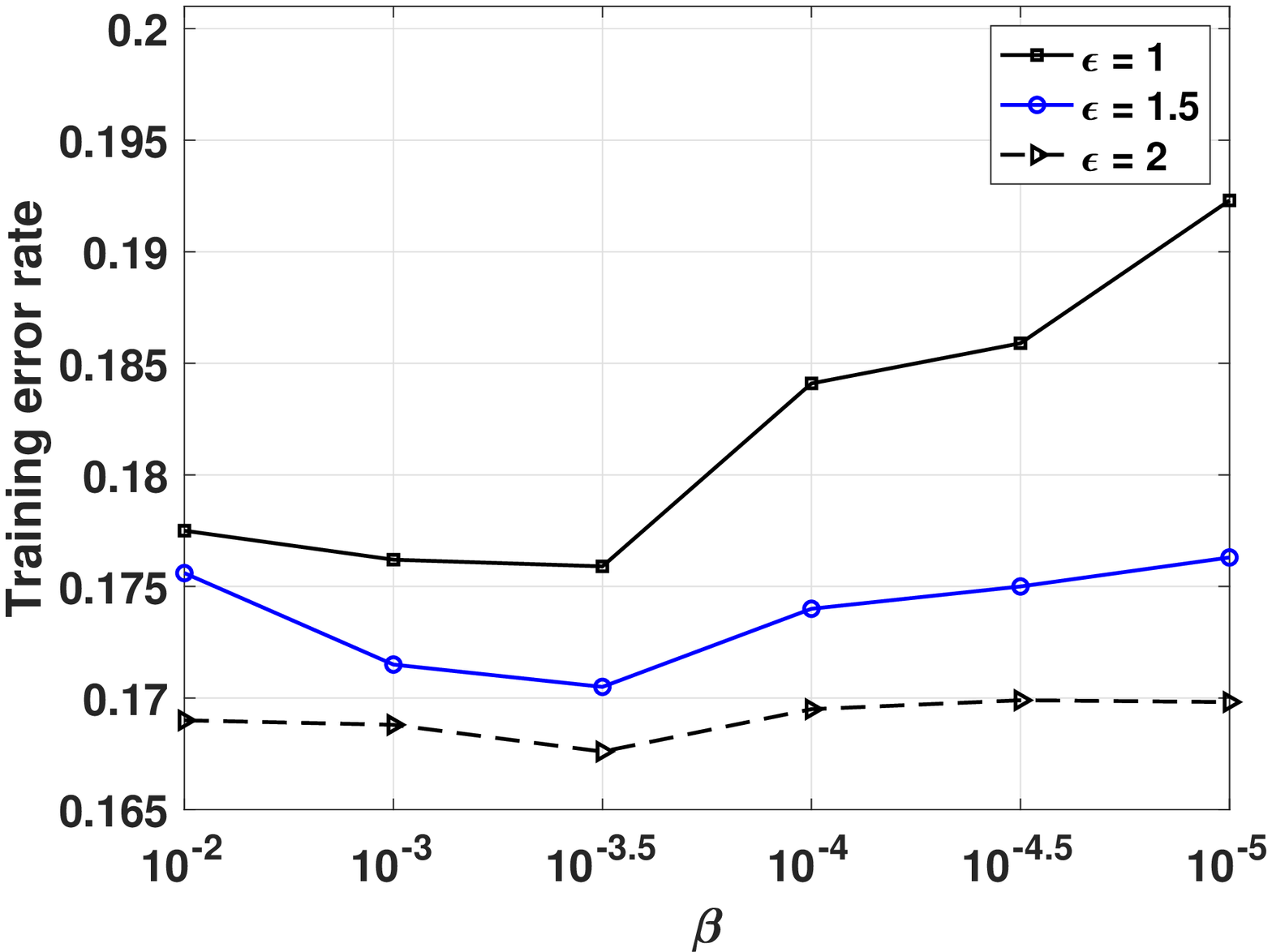} & \includegraphics[trim={0.15cm 0 1.3cm 0.8cm},clip=true, width=1.64in]{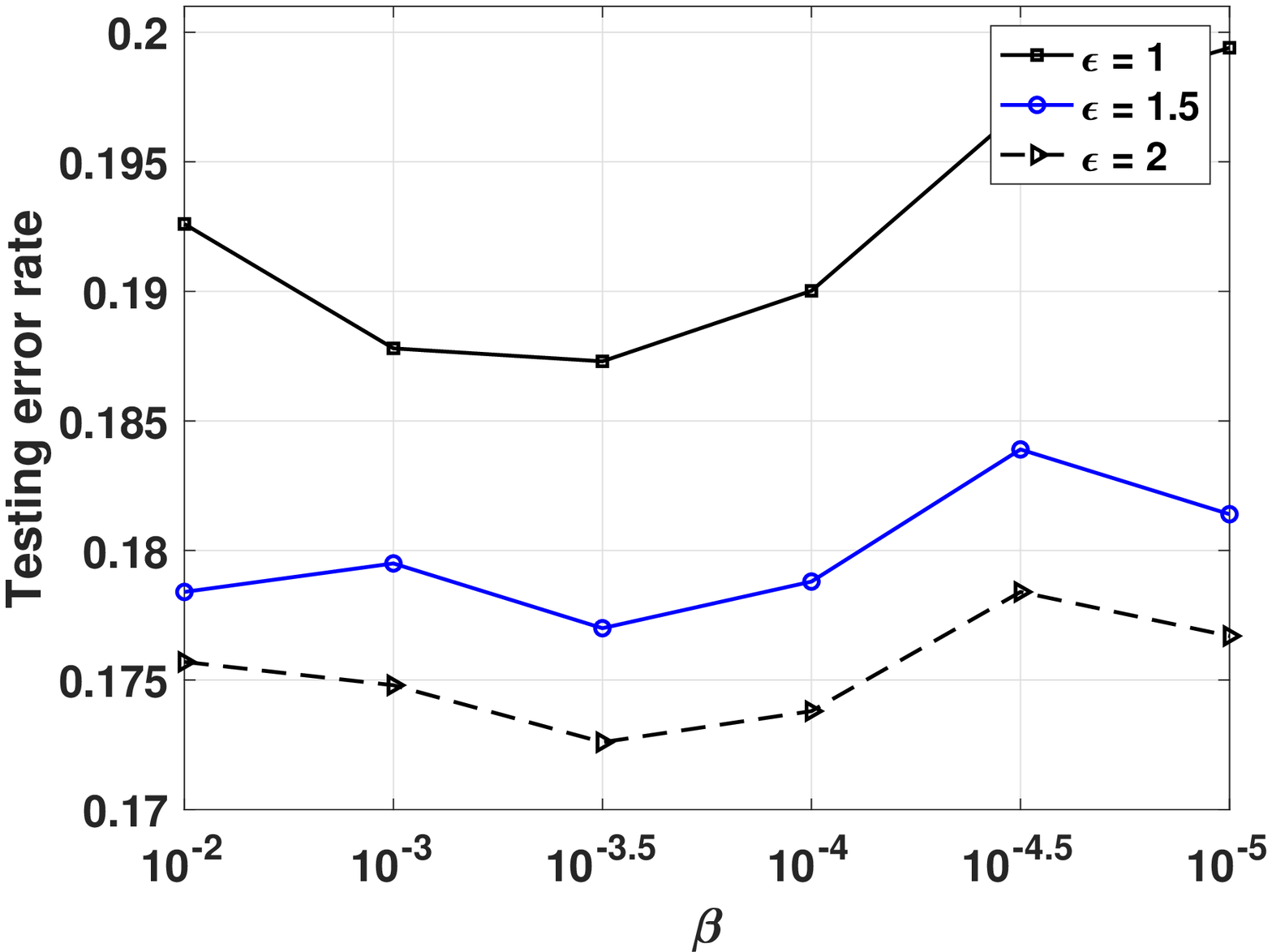} & 
\end{array}$ 
\caption{Effects of optimization accuracy $\beta$ }
\label{beta}
\end{figure}

\begin{figure} [!t]
\centering
$\begin{array}{c@{\hspace{0.0in}}c@{\hspace{0.0in}}c@{\hspace{0.0in}}c}
\includegraphics[trim={0.15cm 0 1.3cm 0.8cm},clip=true, width=1.64in]{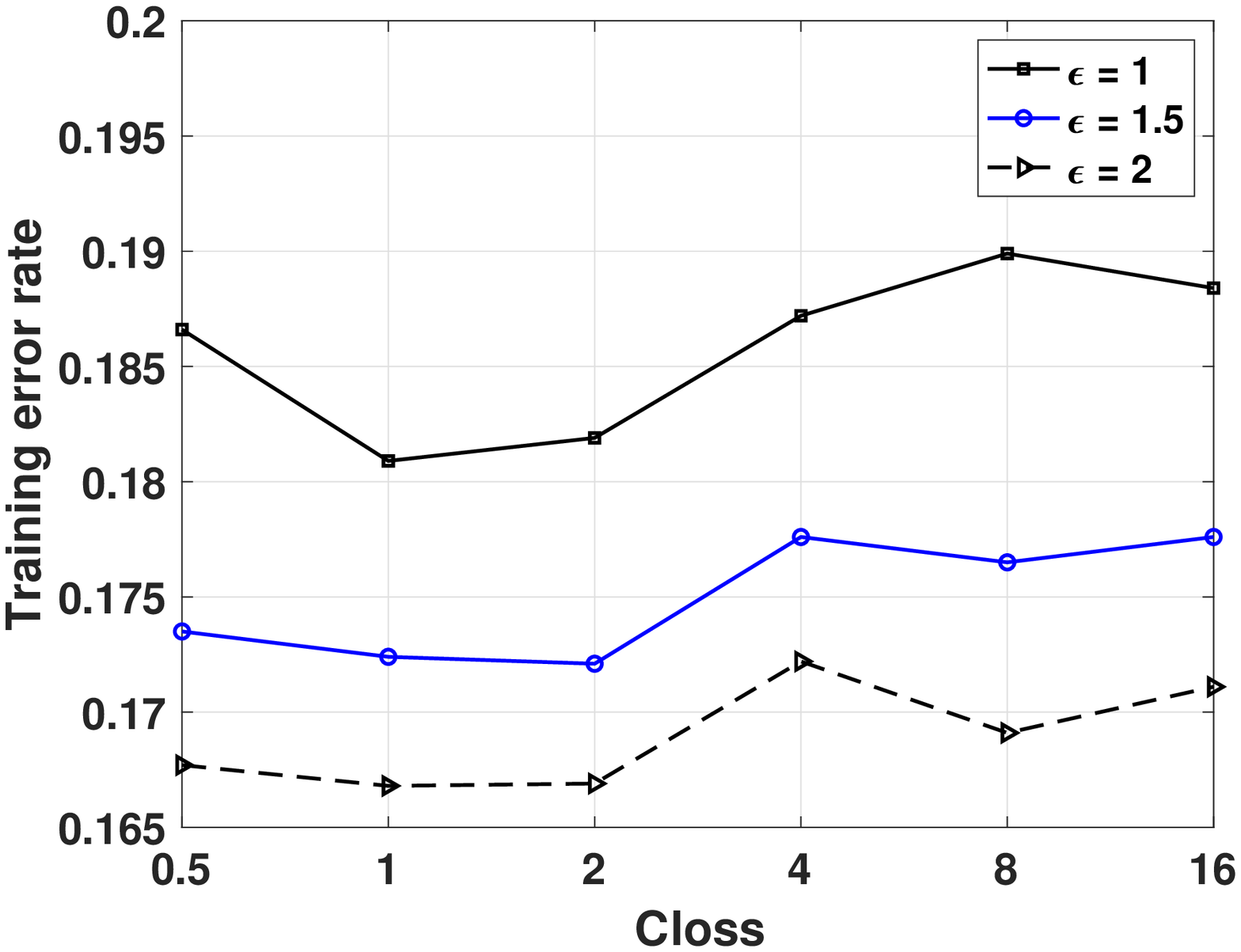}&
\includegraphics[trim={0.15cm 0 1.3cm 0.8cm},clip=true, width=1.64in]{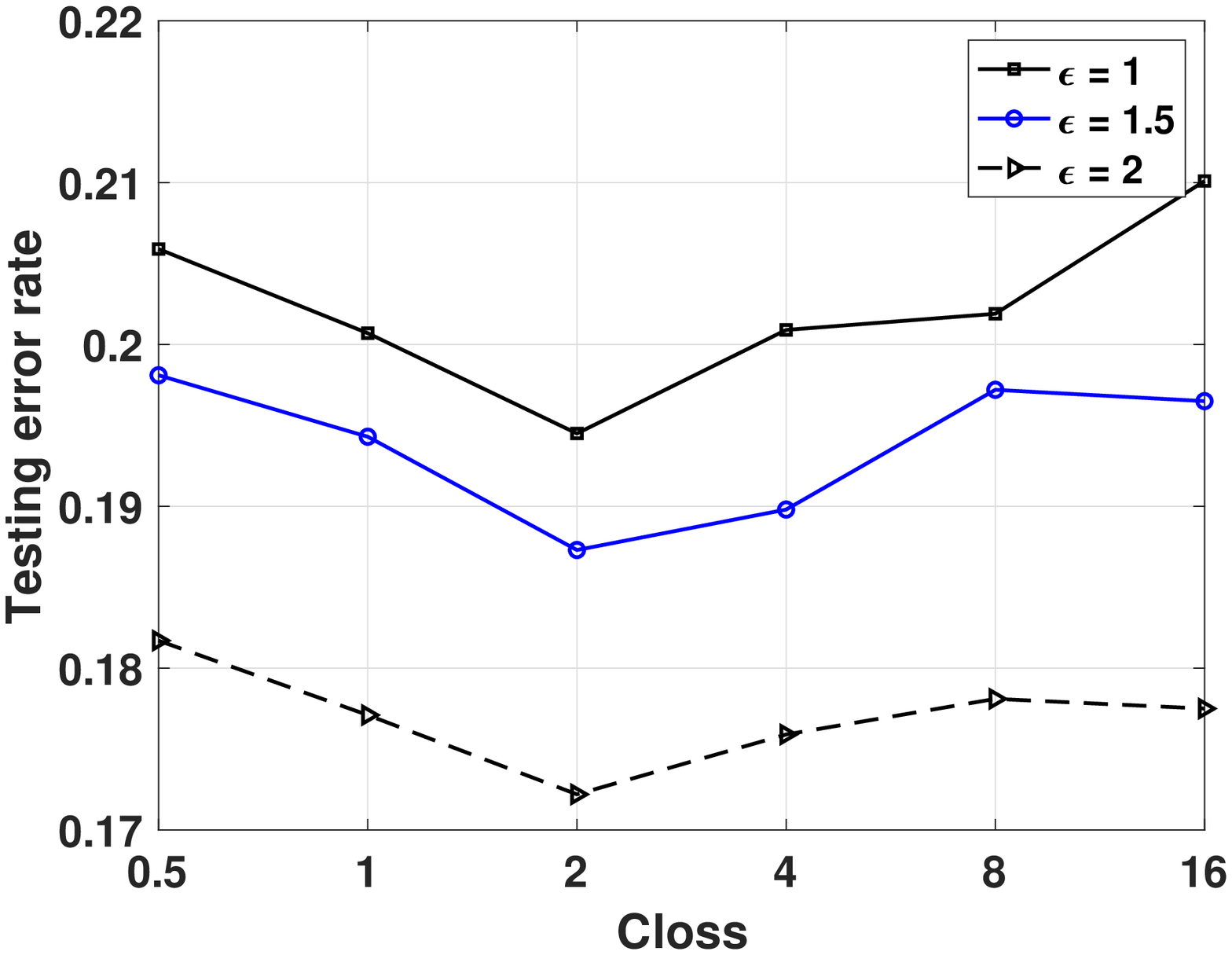} & 
\end{array}$ 
\caption{Effects of clipping threshold $C_{loss}$} 
\label{closs}
\end{figure}

\begin{figure} [!t]
\centering
$\begin{array}{c@{\hspace{0.0in}}c@{\hspace{0.0in}}c@{\hspace{0.0in}}c}
 
\includegraphics[trim={0.15cm 0 1.3cm 0.8cm},clip=true, width=1.64in]{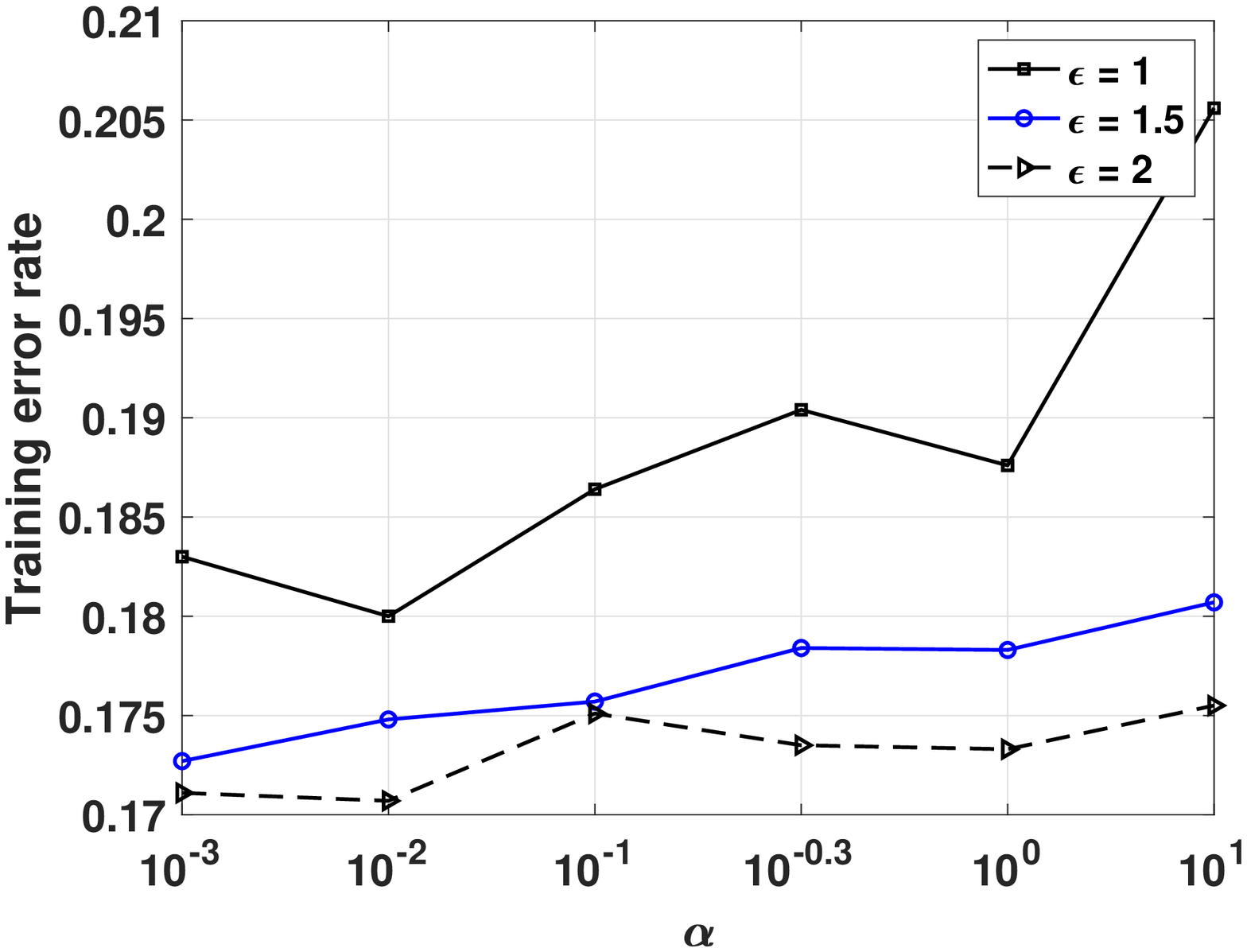}
& \includegraphics[trim={0.15cm 0 1.3cm 0.8cm},clip=true, width=1.64in]{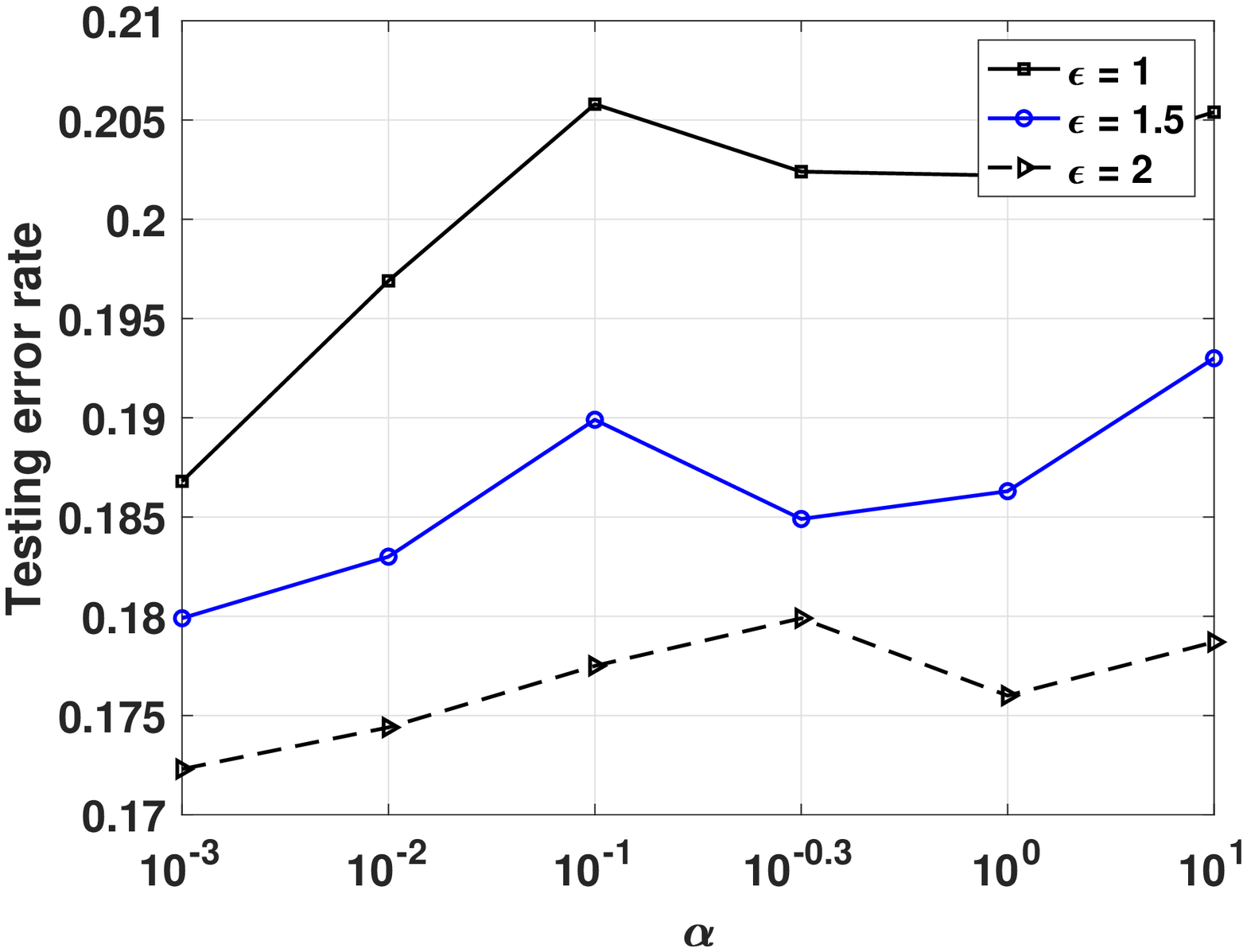} & 
\end{array}$ 
\caption{Effects of $\alpha$}
\label{alpha}
\end{figure}

\begin{figure}[t]
	\centering   
	\includegraphics[width=0.33\textwidth]{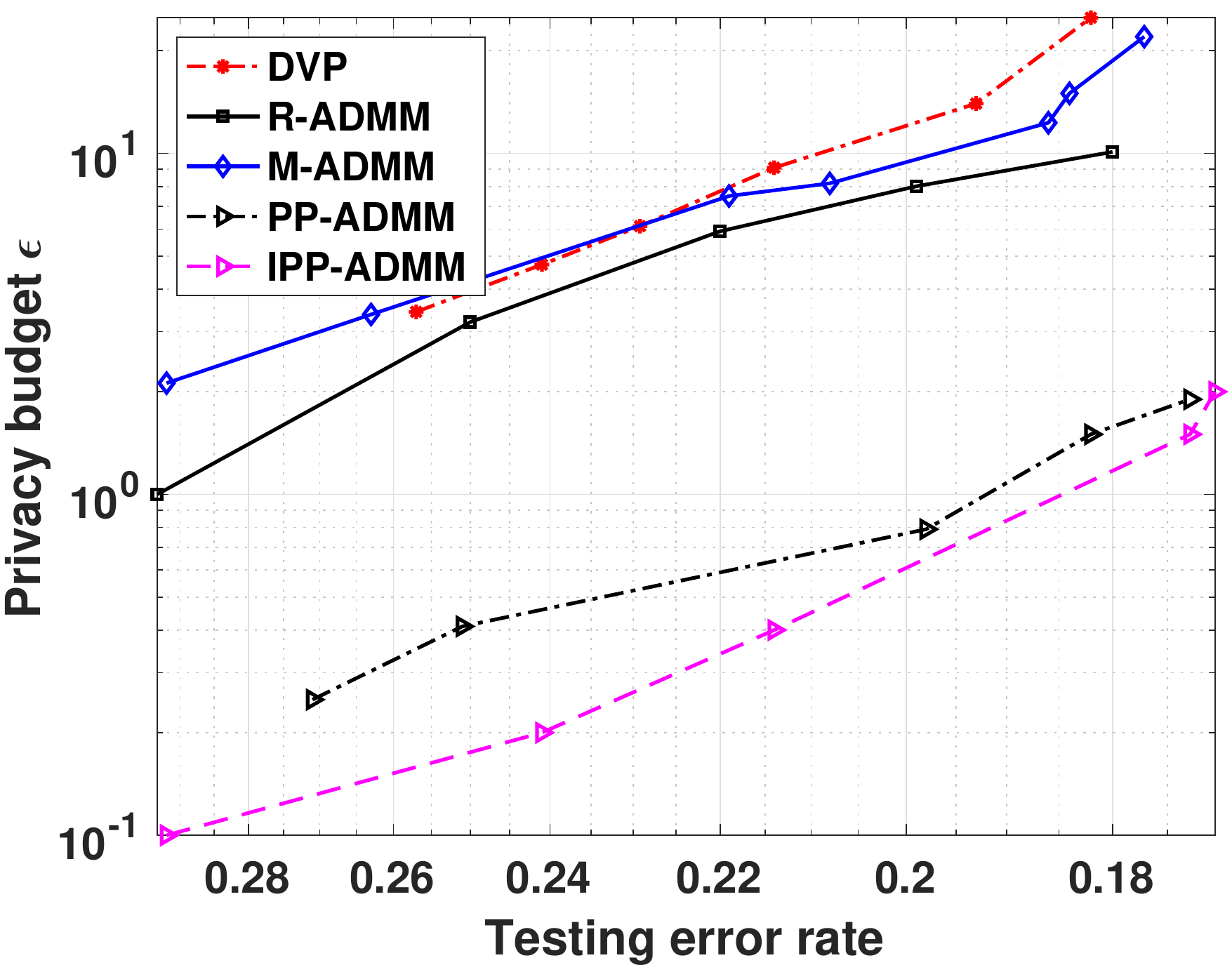}
	\caption{Trade-off between classification error rate and privacy on Adult dataset }
	\label{Cifa1}
\end{figure}

\begin{figure*}[t]
	\centering  
	\includegraphics[width=0.33\textwidth]{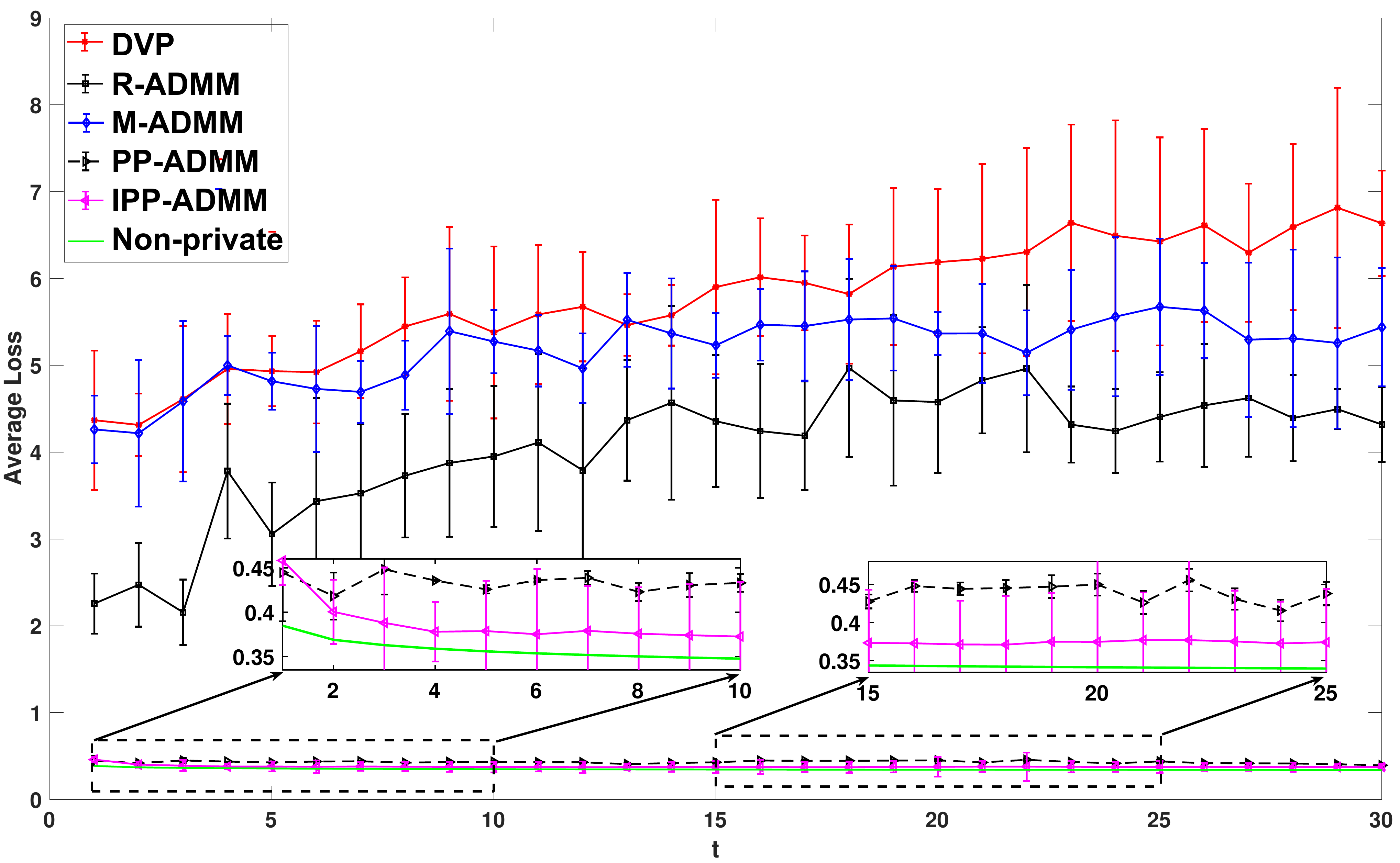}
	\includegraphics[width=0.33\textwidth]{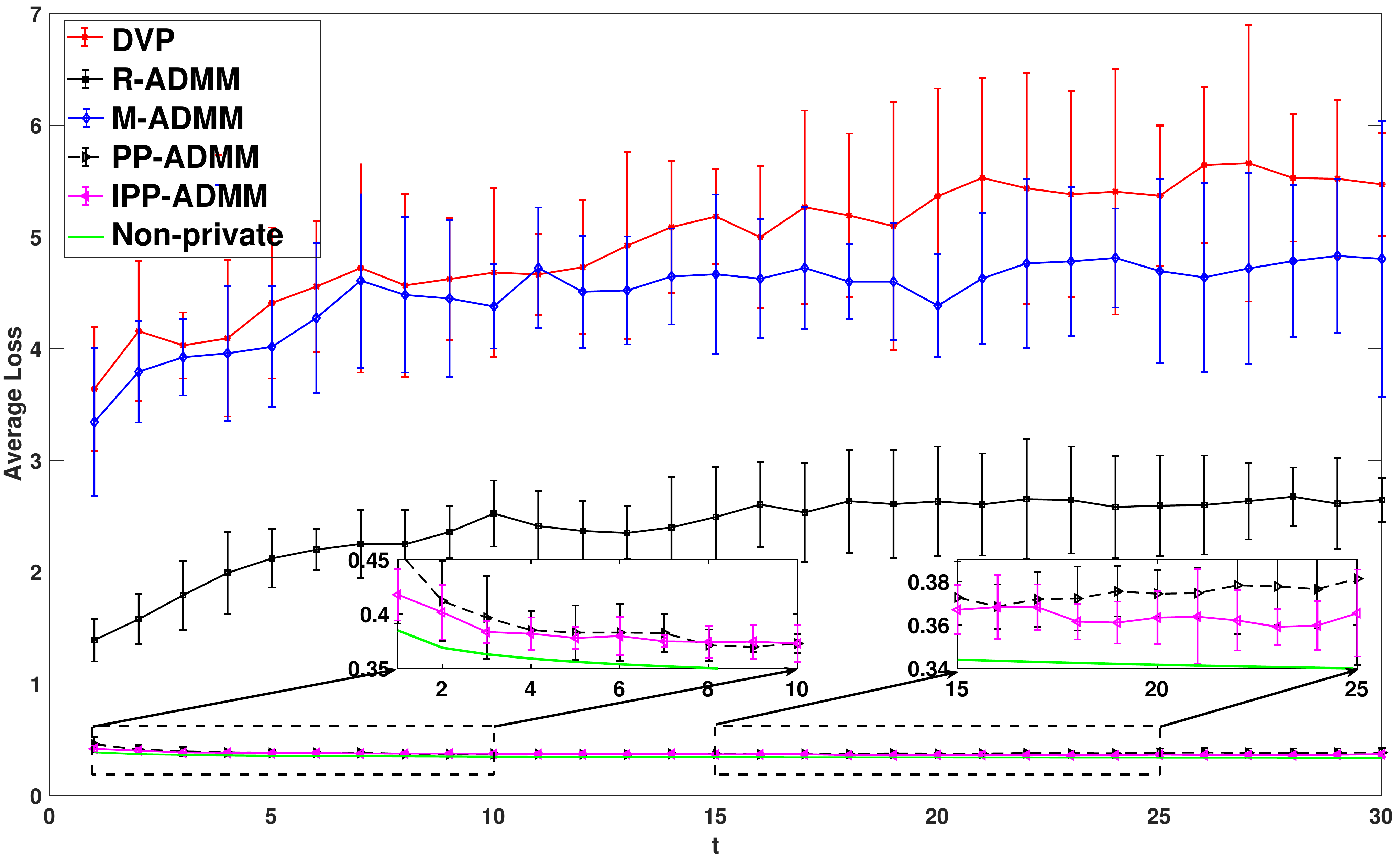}
	\includegraphics[width=0.33\textwidth]{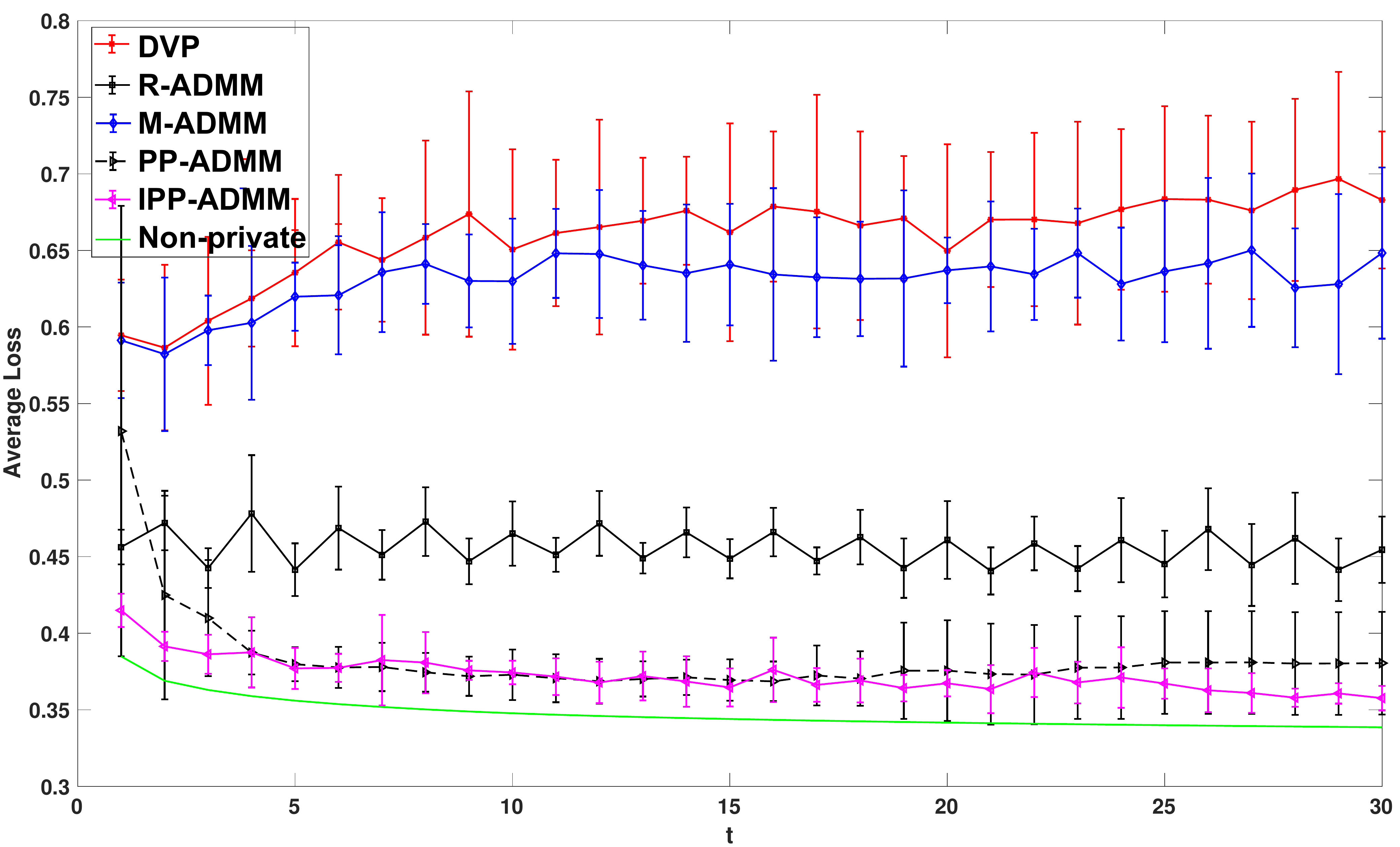}
	\caption{Convergence comparisons on Adult dataset (left: $\epsilon=1$, middle: $\epsilon=2$, right: $\epsilon=10$) }
	\label{Cifa11}
\end{figure*}
\begin{figure*} [!t]
\centering
$\begin{array}{c@{\hspace{0.5in}}c@{\hspace{0.5in}}c}
\includegraphics[trim={0.15cm 0 1.6cm 0.8cm},clip=true, width=2.5in]{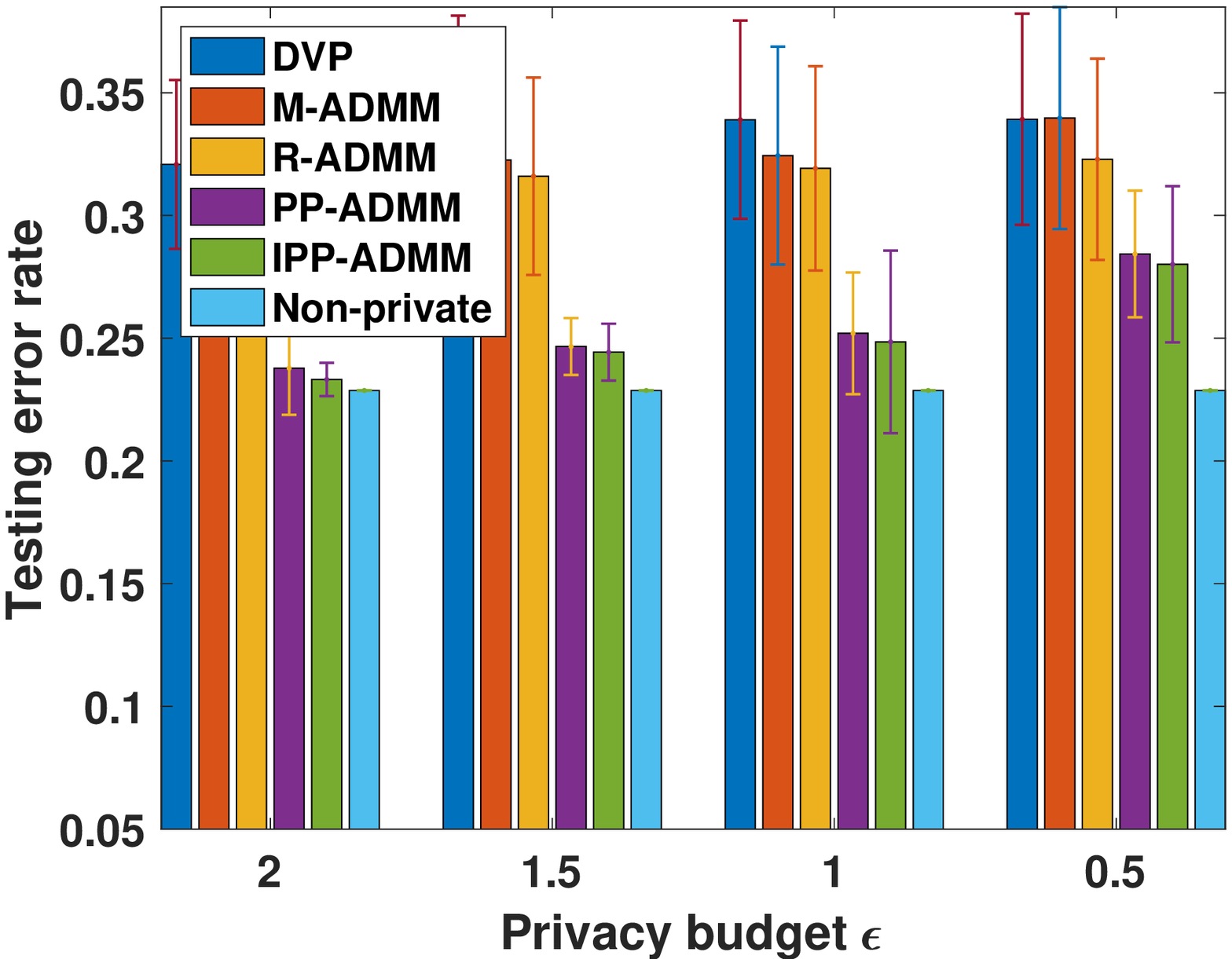} & \includegraphics[trim={0.15cm 0 1.6cm 0.8cm},clip=true, width=2.5in]{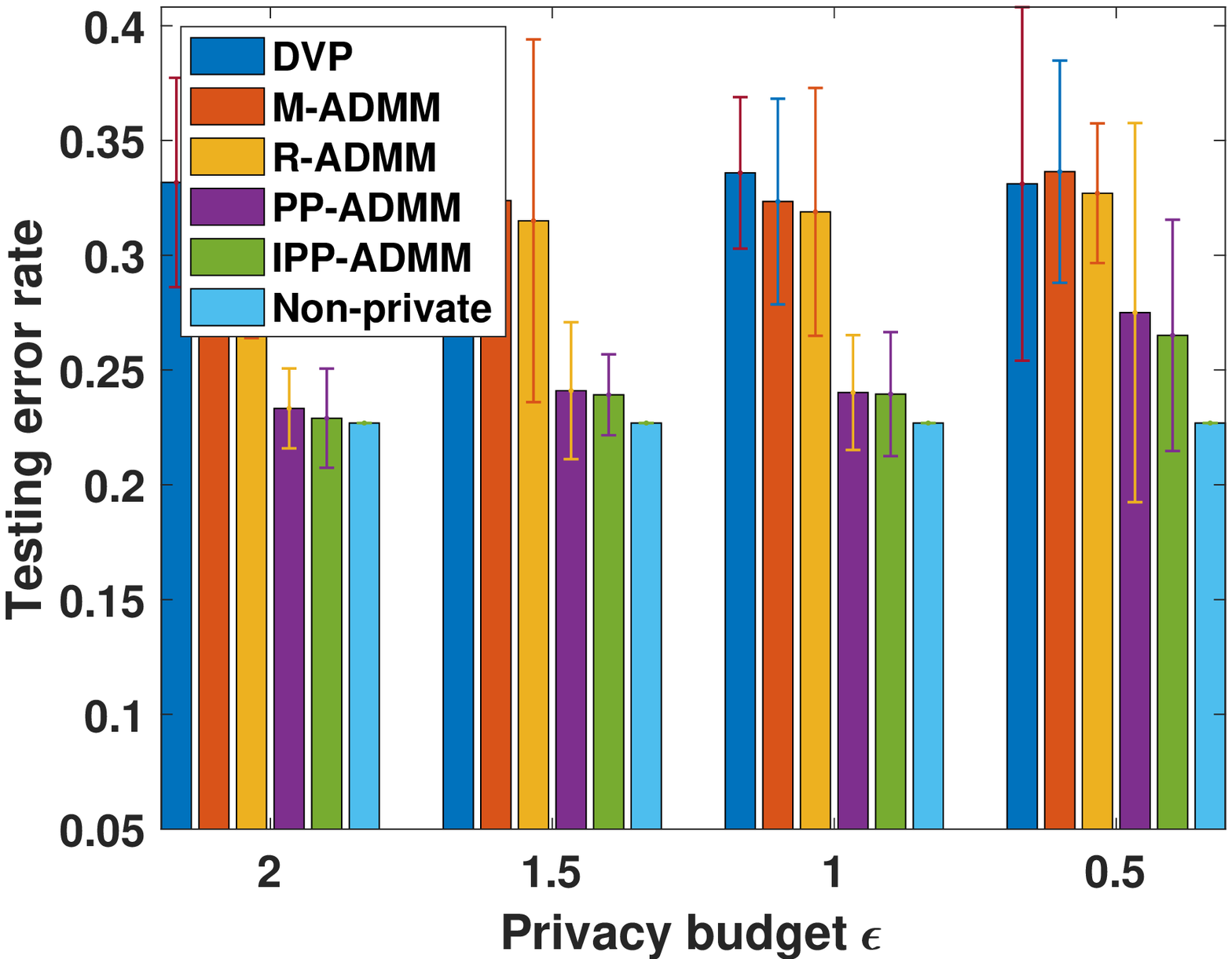}  \\ [0.0cm]
 \mbox{(a) Brazil} & \mbox{(b) US}
\end{array}$ 
\caption{Classification error rate comparisons on Brazil and US datasets.}
\label{Cifar2}
\end{figure*}

{\bf{Impacts of parameters}.} In this set of experiments on the Adult dataset, we present the effects of privacy budgets splitting and optimization accuracy (i.e., gradient norm threshold) $\beta$ on the performance of PP-ADMM, and the loss clipping threshold $C_{loss}$ and the quality function significance threshold $\alpha$ on the performance of IPP-ADMM. Specifically, we adjust different parameter settings separately, while keeping the rest constant to represent their impacts on training and testing accuracy. 

For the privacy budgets splitting of PP-ADMM, we first convert the overall privacy budget parameters $(\epsilon,\delta)$ to $\rho_{total} = \frac{\epsilon^2}{4\ln(1/\delta)}$. We set $\rho_{i1} = \frac{\rho_{total}}{T}\cdot (1-splits)$ and $\rho_{i2} =\frac{\rho_{total}}{T}\cdot {splits}$, where $splits$ denotes the fraction of $\rho_{total}$ allocated to $\rho_{i2}$. By tuning $splits$, we can find the good trade-off between the privacy budget for perturbing the objective and perturbing the approximate solution. In addition, we compute $\epsilon_{i1} = \rho_{i1}+2\sqrt{\rho_{i1}\ln(1/\delta_{i1})} $ with $\delta_{i1}= 10^{-4}$, and set $\epsilon_{i3} =0.99\cdot \epsilon_{i1}$ to dedicate most of the budget to reduce the amount of noise for perturbing the objective and increase the influence of regularization. Figure \ref{splits} shows the effects of privacy budget splitting on the performance of PP-ADMM by setting $\beta = 10^{-6}$. As $splits$ decreases, i.e., allocating less privacy budgets for perturbing the approximate solution, it yields better training and testing accuracy. Thus, we set $splits = 0.001$ to achieve a good trade-off between amount of noise added to the objective and approximate solution.

Figure \ref{beta} shows how classification accuracy changes with varying values of $\beta$ and fixing $splits = 0.001$. The parameter $\beta$ controls the optimization accuracy of each iteration of PP-ADMM training process and the amount of noise for perturbing the approximate solution. As it can be observed from the figure, due to randomness of objective introduced by the random noise, when $\beta$ is too small, solving the noisy objective perfectly in each iteration may not help the final performance. Conversely, when $\beta$ is too large, large amount of noise is added to perturb the approximate solution, which also leads to performance degradation. In our experiments, we thus fix $\beta = 10^{-3.5}$ that achieves lowest training/testing error rate. 

The IPP-ADMM algorithm has two threshold parameters, $C_{loss}$ and $\alpha$. These two parameters are used to bound the sensitivity of the quality function, and the value of quality function, respectively. If the clipping threshold $C_{loss}$ is set to a small value, it significantly reduces the sensitivity but at the same time it leads much information loss in the estimation of quality function. On the other hand, if $C_{loss}$ is large, the sensitivity becomes large that results in adding too much noise to the estimation. Thus, too large or small values of $C_{loss}$ have a negative effect on employing SVT to check whether the current approximate solution has a big enough difference from that of previous iteration. As we can see from Figure \ref{closs}, $C_{loss} = 2$ achieves a good trade-off between high information loss and large sensitivity. In Figure \ref{alpha}, we fix the the clipping threshold $C_{loss} = 2$ and vary $\alpha$ from $10^{-3}$ to $10$ to see the effect of $\alpha$ on the performance. Although large value of $\alpha$ may potentially reduce the releasing of low quality approximate solution and reduce the communication cost, we observe that it also leads the learning performance degradation. We then choose $\alpha = 10^{-3}$ in our experiments, which achieves the lowest testing/training error rate. 

{\bf{Performance comparisons}.} We also present the trade-off between classification error rate and privacy cost in Figure \ref{Cifa1}, where we measured the privacy costs of all algorithms to obtain some specified testing error rates. Figure \ref{Cifa1} illustrates that both of our methods have consistently lower privacy cost than those baselines algorithms. Compared with PP-ADMM, IPP-ADMM further saves more privacy cost due to limiting the number of releasing low-quality computational results. Additionally, we also inspect the convergence performance (i.e., average loss) of different algorithms under the same budgets, as shown in Fig. \ref{Cifa11}. We can observe that when budget $\epsilon$ decreases from 10 to 1, the average loss values of baseline algorithms increase, which matches the simulation results shown in \cite{zhang2018improving,zhang2018recycled,dingbigdata}. 
Although we also analyze the baseline algorithms using zCDP to provide tight privacy bound, using Gaussian noise instead of Gamma noise might be more beneficial to the performance, which usually has at least $\sqrt{d}$ times improvement of the empirical risk bound \cite{pmlr-v23-kifer12}, where $d$ is the dimension of training model. And our proposed algorithms continues to outperform the baseline algorithms significantly. 

Figure \ref{Cifar2} compares the accuracy (classification error rate) of different algorithms on Brazil and US. The noise parameter of all algorithms are chosen respectively so that they can achieve the same total privacy loss. As expected, the lower privacy budget, the higher classification error rate. 
As it was observed in the experiments, our proposed algorithms get close to the best achievable classification error rate for a wide range of total privacy loss considered in the experiments.
\section{Conclusions}\label{conclusion}
In this paper, we have developed (Improved) plausible differentially private ADMM algorithm, PP-ADMM and IPP-ADMM. In PP-ADMM, in order to release the shackles of the exact optimal solution during each ADMM iteration to ensure differential privacy, we consider outputting a noisy approximate solution for the perturbed objective. To further improve the utility of PP-ADMM, we have adopted SVT in IPP-ADMM to check whether the current approximate solution has a big enough difference from that of previous iteration. Moreover, we have analyzed privacy loss under the framework of zCDP and generalization performance guarantee. Finally, through the experiments on real-world datasets, we have demonstrated that the proposed algorithms outperform other differentially private ADMM based algorithms while providing the same privacy guarantee. In future work, we plan to extend our privacy analysis to non-convex loss function and apply our methods to deep learning framework. Another research direction is to study the idea of using SVT to other distributed (federated) deep learning framework to save the privacy budget and reduce the communication cost.

\begin{acks}
We thank the reviewers for their insightful comments. The work of J. Ding and M. Pan was supported in part by the U.S. National Science Foundation under grants US CNS-1646607, CNS-1801925, and CNS-2029569. The work of J. Bi was partially supported by NSF grants: CCF-1514357 and IIS-1718738, and NIH grant 5K02DA043063-03.
\end{acks}

\bibliographystyle{ACM-Reference-Format}
\bibliography{Main}


\end{document}